\newcommand{\balpha}{\bm{\alpha}}
\newcommand{\bx}{\mathbf{x}}
\newcommand{\bc}{\mathbf{c}}
\newcommand{\bC}{\mathbf{C}}
\newcommand{\bj}{\mathbf{j}}
\newtheorem{theo}{Theorem}
\newtheorem{lem}{Lemme}
\newtheorem{pro}{Proposition}
\newtheorem{coro}{Corollary}
\newtheorem{rem}{Remark}
\title{Approximation with SiLU Networks: Constant Depth and Exponential Rates for Basic Operations}
\author{
	Koffi O. AYENA\orcidlink{0009-0009-0858-258X} \\
	\small Université de Lomé, Laboratoire de Modélisations Mathématiques et Applications, Lomé, Togo \\
	\small Université de Belfort de Montbéliard, Laboratoire Interdisciplinaire Carnot de Bourgogne, \\
	\small ICB/UTBM, UMR 6303 CNRS, Belfort, France \\
	\small \texttt{kayena257@gmail.com} \\
	\small \texttt{koffi.ayena@utbm.fr}
}
\date{\today}
\begin{document}
	\maketitle

	\begin{abstract}
		\noindent
		We present SiLU network constructions whose approximation efficiency depends critically on proper hyperparameter tuning. For the square function $x^2$, with optimally chosen shift $a$ and scale $\beta$, we achieve approximation error $\varepsilon$ using a two-layer network of constant width, where weights scale as $\beta^{\pm k}$ with $k = \mathcal{O}(\ln(1/\varepsilon))$. We then extend this approach through functional composition to Sobolev spaces, we obtain networks with depth $\mathcal{O}(1)$ and $\mathcal{O}(\varepsilon^{-d/n})$ parameters under optimal hyperparameters settings. Our work highlights the trade-off between architectural depth and activation parameter optimization in neural network approximation theory.
	\end{abstract}
	
	\paragraph{Keywords:}
	Neural network approximation , SiLU activation,  Sobolev spaces, Complexity bounds


	\section{Introduction}
	\label{sec1}
	The study of approximation capabilities of neural networks is a central topic in the theory of deep learning. Since the classical results of Cybenko \cite{cybenko1989approximation}, Hornik \cite{hornik1991approximation}, and Funahashi \cite{funahashi1989approximate}, it is known that single-hidden-layer neural networks (NN) with nonlinear activation functions can approximate any continuous function on a compact domain to arbitrary accuracy. These universal approximation theorems, however, do not address efficiency, i.e., how network size relates to approximation error for specific function classes.
	
	Barron \cite{cheang2001penalized} introduced risk bounds for function estimation with single-hidden-layer networks, using a penalized least-squares criterion for model selection. The development of deep networks and the ReLU activation function later led to sharper approximation results. Yarotsky \cite{yarotsky2017error} established exponential expressivity of ReLU networks for smooth functions, and Petersen and Voigtlaender \cite{petersen2018optimal} derived optimal bounds for function classes with variable regularity. The roles of network depth and width in approximation efficiency were clarified in \cite{telgarsky2016benefits, lu2017expressive, arora2016understanding}.
	
	In addition to ReLU, other perspectives on neural network expressivity have been developed. Daniely, in \cite{daniely2020neural}, analyzed learning mechanisms beyond over-parameterization. More recently, smooth activation functions have been reconsidered, after initially being set aside, due to their empirical performance. The SiLU (or swish) activation \cite{elfwing2018sigmoid, ramachandran2017searching} and its variant GELU \cite{hendrycks2016gaussian, lee2023gelu} have been adopted in modern architectures for variational surrogate model \cite{ayena:hal-05178445}.
	
	On the theoretical side, several recent works have examined the approximation and learning properties of smooth activations. Yet, a systematic characterization of the approximation power of SiLU networks, comparable to what is known for ReLU, has not been established.
	
	The purpose of this work is to address this gap by providing a rigorous analysis of the approximation properties of SiLU-activated networks, including convergence rates and complexity bounds, thereby situating them within the general framework of neural network approximation theory.
	
	The remainder of this article is organized as follows. Section~\ref{sec2} provides essential background on neural network definitions and uniform approximation concepts, establishing the foundational framework for our analysis. In Section~\ref{sec3}, we present our main contributions, beginning with the fundamental approximation of the square function using SiLU networks, which serves as the building block for subsequent results.
	This section contains several key contributions:
	
	\begin{itemize}
		\item With weights scale as $ \mathcal{O}(\ln(1/\varepsilon))$, Theorem~\ref{theo1} demonstrates that the square function $x^2$ can be approximated with exponential accuracy $\mathcal{O}(\omega^{-2k})$ by a shallow SiLU network with only 3 neurons and constant depth. Theorem~\ref{theo2} extends this to multiplication $x \cdot y$ , requiring just 4 neurons in a single hidden layer.
		
		\item Theorem~\ref{theo3} generalizes to arbitrary monomials $x^m$, showing they can be approximated with networks of depth $\mathcal{O}(m)$ and constant width.
		
		\item Along similar lines, Theorem~\ref{theo4} presents an alternative single-hidden-layer construction using finite differences, achieving monomial approximation with $m+1$ neurons.
		
		\item Theorem~\ref{theo5} combines these results to approximate arbitrary polynomial functions.

		\item Corollary~\ref{coro1} reveals that the monomial approximation networks naturally form a recurrent neural network (RNN) with shared parameters, efficiently generating polynomial bases $\{1, x, x^2, \ldots, x^m\}$ through sequential computation.

		\item For Continuous function approximation, Theorem~\ref{theo6}, Proposition~\ref{pro1} and~\ref{pro2} and Corollary~\ref{coro2} develop a step-function approach, with constant depth and size $N = \mathcal{O}\left( \frac{b-a}{\omega_f^{-1}(\varepsilon)} \right)$ depending on the modulus of continuity $\omega_f$.
		
		\item Theorem~\ref{theo8} establishes optimal approximation rates for functions in Sobolev spaces $\mathcal{F}_{n,d}$, showing that SiLU networks achieve error $\varepsilon$ with size $\mathcal{O}(\varepsilon^{-d/n})$ and constant depth, eliminating the logarithmic factor present in comparable ReLU constructions.
	\end{itemize}

	\noindent Section~\ref{sec4} discusses practical implications and presents experimental validation of our theoretical results. We analyze the optimization of key parameters ($\beta$, $k$, and $a$) in our constructions and provide comparative experiments between SiLU and ReLU networks. 
	
	Our work contributes to the growing literature on the expressive power of neural networks \cite{petersen2018optimal,lu2017expressive, telgarsky2016benefits} by providing explicit construction methods with guaranteed error bounds. The results have implications for understanding why deep networks with smooth activations perform so effectively in practice and provide theoretical guidance for the design of network architectures. The constructive nature of our proofs offers potential pathways for developing specialized networks for scientific computing applications where high precision is required.

	\section{Definition of neural network and uniform approximation}
	\label{sec2}
	\paragraph{Definition}
	
	A neural network of depth $m$ (i.e., $m$ layers including input and output, or $(m-2)$ hidden layers) is a function $N : \mathbb{R}^p \to \mathbb{R}^d$ defined as the composition of affine maps and nonlinear activation functions.
	More precisely, the network is given by
	$$
	N(\bx) = (\phi_{m} \circ h_m) \circ (\phi_{m-1} \circ h_{m-1}) \circ \cdots \circ (\phi_1 \circ h_1)(\bx),
	$$
	where for each $i \in \{1, \dots, m\}$,
	\begin{itemize}
		\item  the $i$-th hidden layer contains $n_i$ neurons, corresponding to the $n_{i}$ coordinates of $\phi_i\circ h_i(\bx_i)$ computation is given by an affine transformation
		$h_i : \mathbb{R}^{n_{i-1}} \to \mathbb{R}^{n_i}$ of the form
		$$
		h_i(\bx_i) = W_i \bx_i + b_i,
		$$
		with the input vector $ \ \bx_i \in \mathbb{R}^{n_{i-1}}$, weight matrix $W_i \in \mathbb{R}^{n_i \times n_{i-1}},$ and bias vector $b_i \in \mathbb{R}^{n_i}$
		and  $\phi_i : \mathbb{R}^{n_i} \to \mathbb{R}^{n_i}$ is a nonlinear activation function applied componentwise (typically $\phi_m = \text{id}_{\mathbb{R}^{n_m}}$ for the output layer),
		\item and the dimensions satisfy $n_0 = p$ and $n_m = d$.
	\end{itemize}
	
	Let $f:[-B,B]^p\to \mathbb{R}$ be a function. We say that a function $g$ approximates $f$ with precision $\varepsilon>0$ if
	
	$$
	\sup_{\mathbf{x} \in [-B, B]^d} |g(\mathbf{x}) - f(\mathbf{x})| \leq \varepsilon.
	$$

	\noindent\textbf{Convention.} If $\bx = (x_1,\dots,x_p) \in \mathbb{R}^p$, then
	\[
	\text{SiLU}(\bx) := \big(\text{SiLU}(x_1), \dots, \text{SiLU}(x_p)\big) \in \mathbb{R}^p,
	\]
	
	that is, SiLU is applied componentwise to vectors, where $$\forall \ x \in  \mathbb{R}, \quad \text{SiLU}(x) = \frac{x}{1+e^{-x}}.$$ 
	The notation \(h(\varepsilon)=\underset{\varepsilon\to 0}{\mathcal{O}}(1)\) means that $h$ does not depend on the $\varepsilon$, it remains bounded as $\varepsilon\to 0$.
	In the following, we simply write $\mathcal{O}(1)$ with $\varepsilon\to 0$ being clear from context.
	\begin{figure}[htb]
		\centering
		\includegraphics[width=\linewidth, height=8cm, keepaspectratio]{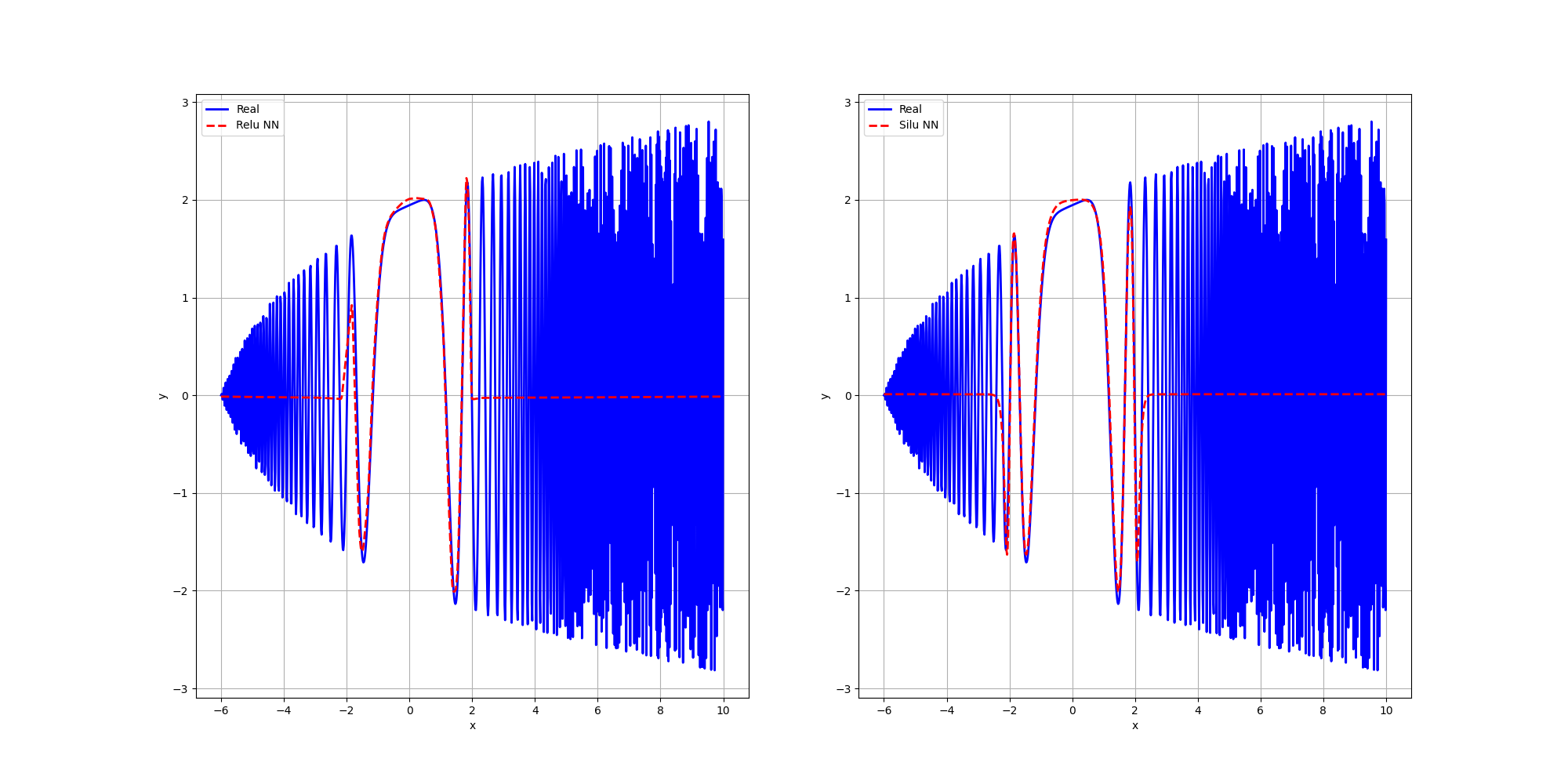}
		\caption{Approximation of the function $\log(7+x)\cos(x^3)$ by ReLU neural networks (left, Mse =1.75) and SiLU neural networks (right, MSE=1.69).}
		\label{fig1}
	\end{figure}
	
	While deep ReLU networks achieve optimal worst-case approximation rates for Sobolev spaces \cite{yarotsky2017error}, empirical evidence suggests that smooth activations like SiLU can sometimes achieve lower error on specific regular functions. For instance, in approximating \(\log(7+x)\cdot\cos(x^3)\) with identical architectures, we observe significantly lower error using SiLU compared to ReLU (Fig.~\ref{fig1}). This observation motivates our investigation into whether smooth activations might offer improved constants or more favorable depth–width trade-offs for functions with particular smoothness or algebraic structure, complementing ReLU's asymptotic optimality for general function classes.
	
	To systematically investigate these potential advantages, we develop a theoretical framework for analyzing SiLU network approximation. Our analysis operates within an extended model where activation parameters can be precision-dependent—acknowledging that practical implementations often tune such parameters. Within this framework, we derive explicit complexity bounds and compare them with established ReLU constructions. This allows us to precisely characterize trade-offs between architectural depth, parameter count, and numerical precision requirements.
	
	\section{Theoretical results: Efficient polynomial approximation with SiLU}
	\label{sec3}
	\subsection{A constant-depth building block: Approximating the square function}
	
	In this section, we establish the theoretical foundations for this efficiency. We begin by showing how SiLU networks approximate the most basic nonlinear monomial, the square function \(x^2\), with constant depth and exponential accuracy, where weights scale as $\beta^{\pm k}$ with $k = \mathcal{O}(\ln(1/\varepsilon))$ (Fig~\ref{fig2}).
	
	Our construction will serve as the fundamental building block for all subsequent approximations.

	\begin{theo}
		\label{theo1}
		For any $ B > 0 $ and target accuracy $\varepsilon \in (0,1)$ , there exist constants $C_1 > 0$, $\omega_1 > 1$ and a family of feedforward SiLU neural networks $\{Q_k\}_{k \in \mathbb{N}}$, with one hidden layer with $3$ neurons each, such that for $k \geq  \left\lceil\frac{\ln C_1 - \ln \varepsilon}{2\ln \omega_1}\right\rceil$
		\[
		\sup_{|x| \leq B} |Q_k(x) - x^2| \leq \varepsilon,
		\]
		where \( \lceil \cdot \rceil \) is the ceiling function. 
		Moreover, the network $Q_k$ has depth $\mathcal{O}(1)$ and size $\mathcal{O}(1)$.
	\end{theo}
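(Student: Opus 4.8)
The plan is to exploit the smoothness of SiLU near a chosen base point and extract the quadratic term of its Taylor expansion by a finite-difference (second-difference) construction, with the scale parameter $\beta$ controlling the remainder. Concretely, SiLU is $C^\infty$ on a neighborhood of any point $a$, with $\text{SiLU}''(a)\neq 0$ for suitable $a$ (e.g.\ $a=0$, where $\text{SiLU}''(0)=1/2$). For a small parameter $h$ and input $x$ with $|x|\le B$, consider the symmetric second difference
\[
D_h(x) \;=\; \text{SiLU}(a+hx) - 2\,\text{SiLU}(a) + \text{SiLU}(a-hx).
\]
A Taylor expansion to fourth order gives $D_h(x) = \text{SiLU}''(a)\,h^2 x^2 + R(x)$ with $|R(x)| \le C h^4 x^4 \le C B^4 h^4$, using that $\text{SiLU}^{(4)}$ is bounded on a fixed neighborhood of $a$. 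Dividing by $\text{SiLU}''(a)\,h^2$ yields a function $Q(x) = D_h(x)/(\text{SiLU}''(a)h^2)$ with $|Q(x) - x^2| \le C' B^4 h^2$. This is exactly a one-hidden-layer SiLU network: three neurons computing $\text{SiLU}(a+hx)$, $\text{SiLU}(a)$ (a constant neuron, or absorbed as a bias), and $\text{SiLU}(a-hx)$, followed by a linear output layer with coefficients $\pm 1/(\text{SiLU}''(a)h^2)$ and an appropriate bias to cancel the $-2\,\text{SiLU}(a)$ term. Here the weights into the output layer scale like $h^{-2}$.

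Next I would convert this into the stated $\beta^{\pm k}$ / exponential-in-$k$ form. Set $h = \omega_1^{-k}$ for a fixed $\omega_1 > 1$ (so the input weights $\pm h$ are $\omega_1^{-k}$ and the output weights are $\omega_1^{2k}$, matching the claimed $\beta^{\pm k}$ scaling with $\beta = \omega_1$). Then the error bound becomes $|Q_k(x) - x^2| \le C_1 \omega_1^{-2k}$ with $C_1 = C' B^4$ (suitably adjusted; I would just name the constant coming out of the Taylor remainder bound). Solving $C_1\omega_1^{-2k} \le \varepsilon$ for $k$ gives precisely $k \ge \lceil (\ln C_1 - \ln\varepsilon)/(2\ln\omega_1)\rceil$, which is the threshold in the statement, and $k = \mathcal{O}(\ln(1/\varepsilon))$. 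The depth is $2$ (one hidden layer) and the width is $3$, both $\mathcal{O}(1)$, independent of $\varepsilon$; only the \emph{weight magnitudes} grow.

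The main technical point — and the only place requiring care — is the uniform control of the Taylor remainder: I must fix a bounded neighborhood of $a$ that contains $a \pm hx$ for all $|x|\le B$ and all $k\ge 1$ (which holds since $|hx|\le B\omega_1^{-1}$ is bounded), and bound $\sup |\text{SiLU}^{(4)}|$ on that neighborhood by an explicit constant; this is routine since SiLU and all its derivatives are smooth and bounded on compacts. A secondary point is verifying that the odd-order terms cancel in the symmetric difference ($\text{SiLU}'(a)hx$ and $\text{SiLU}'''(a)h^3x^3/6$ cancel between the $+hx$ and $-hx$ evaluations), so that the leading error is genuinely $\mathcal{O}(h^4)$ rather than $\mathcal{O}(h^3)$ — this is what produces the $\omega_1^{-2k}$ rate after dividing by $h^2$. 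One should also record that the output bias needed to subtract $2\,\text{SiLU}(a)/(\text{SiLU}''(a)h^2)$ is a legitimate part of the affine output map, so no extra neuron is strictly needed beyond the three, or alternatively the constant $\text{SiLU}(a)$ is realized by a third neuron with zero input weight — either way the count is $3$, as claimed.
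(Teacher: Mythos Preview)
Your proposal is correct and follows essentially the same construction as the paper: the symmetric second difference $\text{SiLU}(a+hx)+\text{SiLU}(a-hx)-2\,\text{SiLU}(a)$, normalized by $\text{SiLU}''(a)\,h^{2}$ (which the paper writes as $K(a)=2\sigma'(a)+a\sigma''(a)$, the same quantity), with $h=\beta^{k}$ and $\omega_1=\beta^{-1}$. The only cosmetic difference is in the remainder estimate: the paper expands $\sigma$ as a full power series and controls all higher-order coefficients via a Cauchy bound $|\sigma^{(n)}(a)|/n!\le C_a(2/\pi)^n$, whereas you stop at fourth order and bound the Lagrange remainder by $\sup|\text{SiLU}^{(4)}|$ on a fixed compact neighborhood of $a$; both routes give the same $\mathcal{O}(h^{2})=\mathcal{O}(\omega_1^{-2k})$ error and the identical threshold for $k$.
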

	\begin{proof}
		Let $\sigma(x) = (1+e^{-x})^{-1}$ and $\text{SiLU}(x)=x\sigma(x)$.
		Define $g_a(x) = \text{SiLU}(x+a) + \text{SiLU}(-x+a) - 2\text{SiLU}(a)$.
		
		Expanding $\sigma(z)$ around $z=a$:
		\begin{align*}
			\sigma(z) &= \sigma(a) + \sigma'(a)(z-a) + \frac{\sigma''(a)}{2}(z-a)^2 + \cdots
		\end{align*}
		Then, using $s=\sigma(a)$, $s'=\sigma'(a)$, $s''=\sigma''(a)$, and 
		\begin{align*}
			(a \pm x)\sigma(a \pm x) &= as \pm (s+as')x + \Big(s' + \frac{a s''}{2}\Big)x^2 \pm \cdots +,
		\end{align*}
		We have 
		\[
		g_a(x) = (2s' + a s'')x^2 + \sum_{j=2}^\infty c_{2j}(a)x^{2j},
		\]
		where $c_{2j}(a) = \frac{2\bigl(s^{(2j-1)}/(2j-1)! + a s^{(2j)}/(2j)!\bigr)}{2s' + a s''}$.
		
		\noindent	Assuming $K(a)=2s'+as'' \neq 0$, set 
		\[
		f_a(x)=g_a(x)/K(a)=x^2 + \sum_{j=2}^\infty c_{2j}(a)x^{2j}.
		\]
		Define iteratively for $k\ge 0$ with $\beta\in(0,1)$:
		\[
		Q_0(x)=f_a(x),\qquad Q_k(x)=\beta^{-2}Q_{k-1}(\beta x)=\beta^{-2k}f_a(\beta^k x).
		\]
		Hence
		\[
		Q_k(x)=x^2 + \sum_{j=2}^\infty c_{2j}(a)\beta^{k(2j-2)}x^{2j}.
		\]
		\ref{bound} gives 
		\[
		\Bigl|\frac{\sigma^{(m)}(a)}{m!}\Bigr| \le C_a\, \rho^{\,m},
		\qquad \rho=\frac{2}{\pi},\; C_a>0.
		\]
		Consequently,
		\[
		|c_{2j}(a)| \le \frac{2C_a(1+|a|\rho)}{|K(a)|}\,\rho^{\,2j-1}
		=: \widetilde{C}_a\,\rho^{\,2j}.
		\]
		For $|x|\le B$,
		\begin{align*}
			|Q_k(x)-x^2|
			&\le \sum_{j\ge 2} |c_{2j}(a)|\,\beta^{k(2j-2)}B^{2j} \\
			&\le \widetilde{C}_a \rho^4 B^{4}\beta^{2k}\sum_{j\ge 2} (\rho^2B^{2}\beta^{2k})^{(j-2)} \\
			&= \widetilde{C}_a\,\rho^4 B^4\,\frac{\beta^{2k}}{1-\rho^2B^2\beta^{2k}} \quad\text{if }\rho^2B^2\beta^{2k}<1.
		\end{align*}
		Choosing $\beta$ small enough that $\rho^2B^2\beta^2\le\frac12$, we obtain for all $k\ge1$
		\[
		|Q_k(x)-x^2| \le 2\widetilde{C}_a\,\rho^4B^4 \beta^{2k}=C_1\beta^{2k},
		\]
		with $\beta^{-1}=\omega_1$.
		
		Choosing $k \geq \bigl\lceil\frac{\ln C_1 -\ln \varepsilon}{2\ln \omega_1}\bigr\rceil= \mathcal{O}\Bigl(\ln 1/\varepsilon\Bigr)$ ensures $|Q_k(x)-x^2|\le\varepsilon$ for all $|x|\le B$. 
		Indeed,
		\begin{figure}[htb]
			\centering
			\includegraphics[width=\linewidth, height=8cm, keepaspectratio]{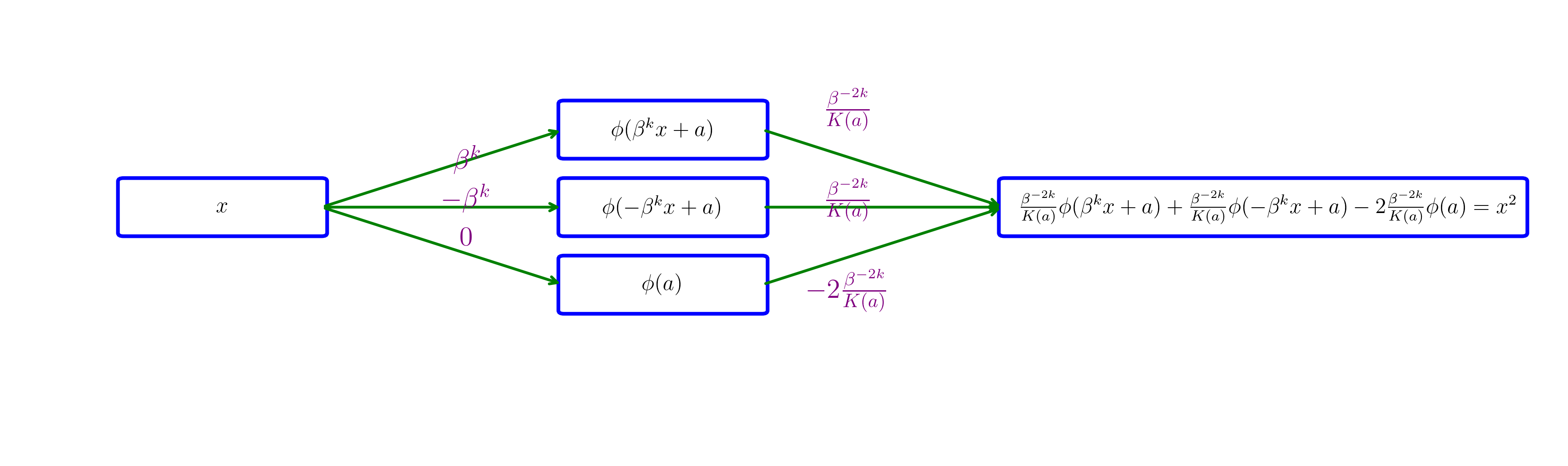}
			\caption{Architecture of NN $Q_k$ (Theorem \ref{theo1}).}
			\label{fig2}
		\end{figure}
		\begin{equation}
			\label{Architecture_reseauQ}
			Q_k(x)=\gamma\cdot \text{SiLU}(Wx+b)
		\end{equation}
		with parameters such that the weights $W=[\beta^{k},-\beta^{k}, 0]^\top$, $\gamma=\frac{\beta^{-2k}}{K(a)}[1,1,-2]$, and the bias $b=[a,a,a]^\top$.
	\end{proof}
	\noindent The theoretical convergence shown above is confirmed numerically in (Fig.~\ref{fig3}), which displays the approximation of the function \(x^2\) by the NN \(Q_k\) with \(a=0\) and \(\beta=0.27\)(Fig.~\ref{fig2}).

	\begin{figure}[htb]
		\centering
		\includegraphics[width=\linewidth, height=6cm, keepaspectratio]{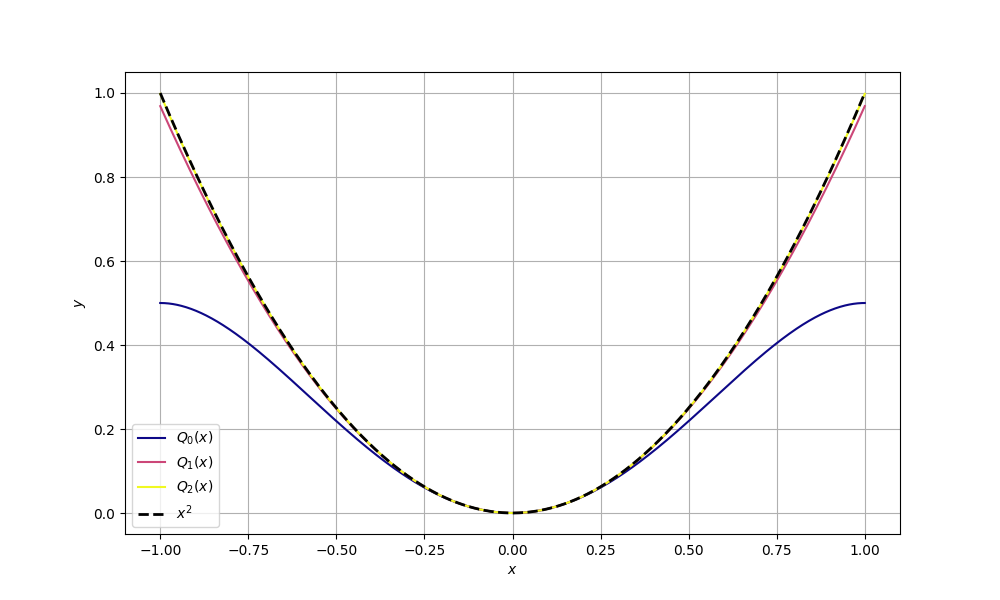}
		\caption{Approximation of the function $x^2$ by the NN $Q_k$ with $a=0$ and $\beta=0.27$.  }
		\label{fig3}
	\end{figure}
	
	\subsection{Efficient Multiplication via Squaring}
	We use the classical algebraic identity 
	
	\begin{equation}
		\label{eq}
		xy = \frac{1}{4}[(x+y)^2 - (x-y)^2]
	\end{equation}
	to establish the ability of neural networks with SiLU activation functions to efficiently approximate the multiplication of two variables. 
	\begin{theo}
		\label{theo2}
		For every $B > 0$ and target accuracy $\varepsilon \in (0,1)$, there exist constants $C_2 > 0$, $\omega_2 > 1$ and a family of feedforward SiLU neural networks $\{M_k\}_{k \in \mathbb{N}}$, with at most one hidden layer with $4$ neurons each, such that for $k \geq \left\lceil\frac{\ln C_2 - \ln \varepsilon}{2\ln \omega_2}\right\rceil$  
		
		$$\sup_{|x|,|y| \leq B} |M_k(x, y) - xy| \leq \varepsilon $$
		
		The network $M_k$ has depth $\mathcal{O}(1)$ and size $\mathcal{O}(1)$.
	\end{theo}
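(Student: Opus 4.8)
The plan is to reduce Theorem~\ref{theo2} to Theorem~\ref{theo1} through the polarization identity \eqref{eq}. First I would note that $|x|,|y|\le B$ forces $|x\pm y|\le 2B$, so I apply Theorem~\ref{theo1} on the interval $[-2B,2B]$: this furnishes a family $\{Q_k\}$ of one-hidden-layer SiLU networks together with constants $C_1'>0$ and $\omega_1>1$ (the ones the theorem produces for the bound $2B$) satisfying $|Q_k(z)-z^2|\le C_1'\,\omega_1^{-2k}$ for all $|z|\le 2B$. I then define
\[
M_k(x,y) := \tfrac14\bigl(Q_k(x+y)-Q_k(x-y)\bigr)
\]
and estimate, via \eqref{eq},
\begin{align*}
|M_k(x,y)-xy|
&= \tfrac14\bigl|\,[Q_k(x+y)-(x+y)^2]-[Q_k(x-y)-(x-y)^2]\,\bigr|\\
&\le \tfrac14\bigl(C_1'\omega_1^{-2k}+C_1'\omega_1^{-2k}\bigr)
   =\tfrac12 C_1'\,\omega_1^{-2k}=:C_2\,\omega_2^{-2k},
\end{align*}
with $C_2=\tfrac12 C_1'$ and $\omega_2=\omega_1>1$. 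Imposing $C_2\,\omega_2^{-2k}\le\varepsilon$ then yields precisely the stated threshold $k\ge\bigl\lceil(\ln C_2-\ln\varepsilon)/(2\ln\omega_2)\bigr\rceil$, and $M_k$ approximates $xy$ to accuracy $\varepsilon$ uniformly on $[-B,B]^2$.

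The one place where a little care is needed is the neuron count. From \eqref{Architecture_reseauQ}, $Q_k(z)=\gamma\cdot\text{SiLU}(Wz+b)$ with $W=[\beta^k,-\beta^k,0]^\top$ and $b=[a,a,a]^\top$, so the third neuron simply outputs the constant $\text{SiLU}(a)$. I would take $a=0$, which is admissible in Theorem~\ref{theo1} since $K(0)=2\sigma'(0)+0\cdot\sigma''(0)=\tfrac12\neq0$; because $\text{SiLU}(0)=0$, the constant neuron vanishes and each copy of $Q_k$ uses only two neurons. Expanding $M_k$, the input $(x,y)$ feeds a single hidden layer of $4$ neurons with pre-activations $\beta^k(x+y)$, $-\beta^k(x+y)$, $\beta^k(x-y)$, $-\beta^k(x-y)$, and the output is the linear readout $\tfrac14\beta^{-2k}/K(0)$ against the weight pattern $(+1,+1,-1,-1)$. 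Hence $M_k$ has one hidden layer with $4$ neurons, depth $\mathcal{O}(1)$ and size $\mathcal{O}(1)$, with weights still scaling like $\beta^{\pm k}$ and $k=\mathcal{O}(\ln 1/\varepsilon)$.

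I do not anticipate a substantive obstacle: $M_k$ is merely the building block of Theorem~\ref{theo1}, precomposed with the affine map $(x,y)\mapsto(x+y,x-y)$ and postcomposed with an affine readout, and the error estimate is linear in the errors of the two copies of $Q_k$. The only subtlety worth flagging is justifying $a=0$ in Theorem~\ref{theo1} (done above via $K(0)\neq0$), which is exactly what collapses the naive $3+3$ neurons to $4$; for a generic admissible $a\neq0$ one could instead share the single constant neuron $\text{SiLU}(a)$ across the two copies, giving $5$ neurons — still $\mathcal{O}(1)$, but not the sharp count claimed, so the choice $a=0$ is the clean route.
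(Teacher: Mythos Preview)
Your proof is correct and follows essentially the same route as the paper: polarization identity \eqref{eq}, invoke Theorem~\ref{theo1} on $[-2B,2B]$, define $M_k(x,y)=\tfrac14\bigl(Q_k(x+y)-Q_k(x-y)\bigr)$, and obtain the same error bound with $C_2=\tfrac12 C_1'$ and $\omega_2=\omega_1$. One small refinement worth noting: you specialize to $a=0$ to make the constant neuron vanish, but the paper keeps $a$ arbitrary and still lands on exactly $4$ neurons, because in the difference $Q_k(x+y)-Q_k(x-y)$ the two copies of $-2\,\text{SiLU}(a)$ cancel identically --- so your aside that a generic $a\neq 0$ would force $5$ neurons undersells the construction; $4$ neurons with bias $b=(a,a,a,a)^\top$ suffice for every admissible $a$.
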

	\begin{proof}
		Using the identity \(xy = \frac14[(x+y)^2 - (x-y)^2]\), we construct an approximant \(M_k(x,y)\) from the square-approximating network \(Q_k\) of Theorem~\ref{theo1}.
		For \(|x|,|y|\le B\), we have \(|x\pm y|\le 2B\), so Theorem~\ref{theo1} guarantees
		\[
		\sup_{|x|,|y|\le B} |Q_k(x\pm y)-(x\pm y)^2| \le C_1\omega_1^{-2k}
		\]
		with constants \(C_1>0,\ \omega_1>1\). Hence
		\begin{align*}
			\sup_{|x|,|y|\le B} |M_k(x,y)-xy| 
			&\le \frac14\bigl( |Q_k(x+y)-(x+y)^2| + |Q_k(x-y)-(x-y)^2| \bigr) \\
			&\le C_2\omega_1^{-2k},
		\end{align*}
		where \(C_2 = C_1/2\) and $\omega_2=\omega_1$.
		
		Choosing \(k \geq \left\lceil\frac{\ln C_2 - \ln \varepsilon}{2\ln \omega_2}\right\rceil\) yields an error \(\le\varepsilon\). Since \(Q_k\) has constant depth and size, and the linear operation adds only \(\mathcal{O}(1)\) layer, the total network \(M_k\) also has constant depth and size.
		
		Explicitly, \(M_k\) can be written as a SiLU network with weight matrix
		\[
		W = \beta^{k}\begin{bmatrix}
			+1&+1\\
			-1&-1\\
			+1&-1\\
			-1&+1
		\end{bmatrix},\quad
		\gamma = \frac{\beta^{-2k}}{4K(a)}\begin{bmatrix}1&1&-1&-1\end{bmatrix}^\top,
		\]
		bias \(b = (a,a,a,a)^\top\), and input \(z=(x,y)^\top\).
	\end{proof}
	\noindent Experimentally, we employ the neural network \(\{M_k\}\) (whose architecture is shown in (Fig.~\ref{fig4}) with parameters \(a=0\) and \(\beta=0.27\) to approximate the product \(\cos(x) \cdot \sin(x)\log(x^2)\) via \(M_k\bigl(\cos(x), \sin(x)\log(x^2)\bigr)\). The results are presented in Fig.~\ref{fig5}.
	
	\begin{figure}[htb]
		\centering
		\includegraphics[width=\linewidth, height=6cm, keepaspectratio]{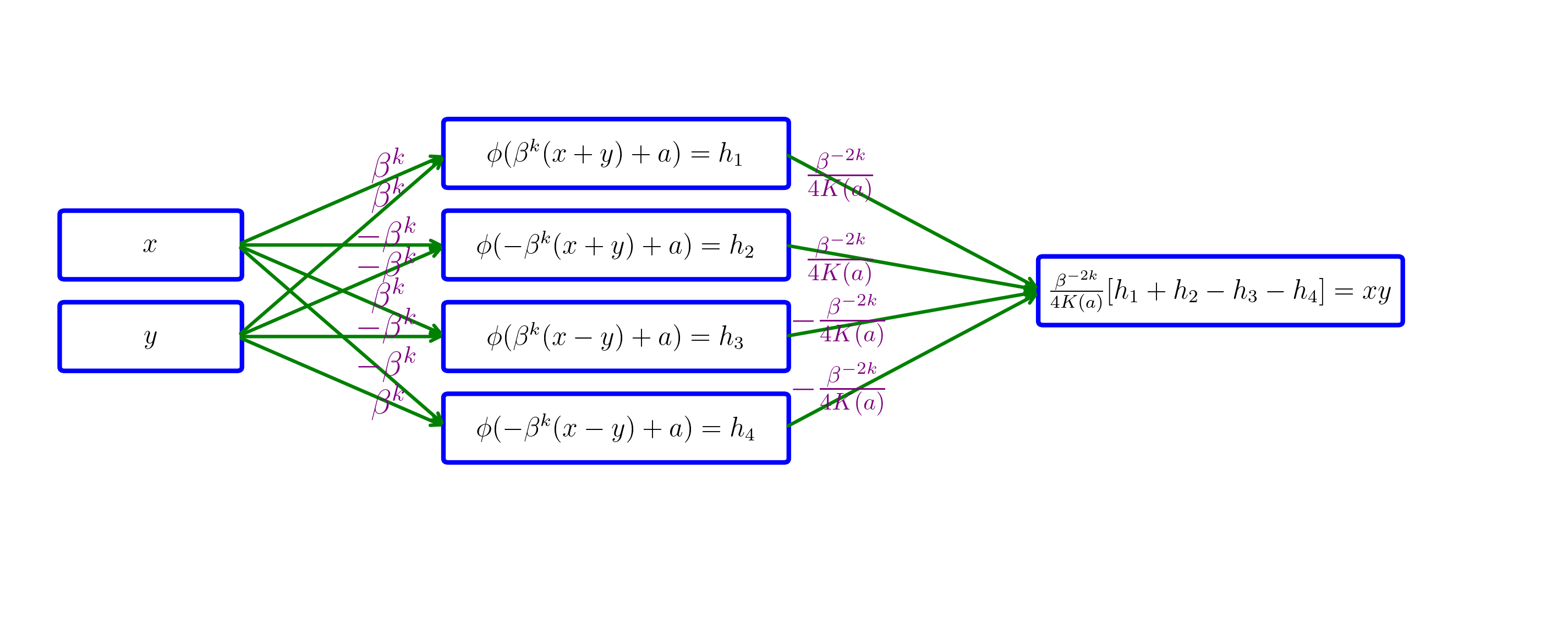}
		\caption{Architecture of NN $M_k$ (Theorem \ref{theo2}).}
		\label{fig4}
	\end{figure}
	\begin{figure}[htb]
		\centering
		\includegraphics[width=\linewidth, height=6cm, keepaspectratio]{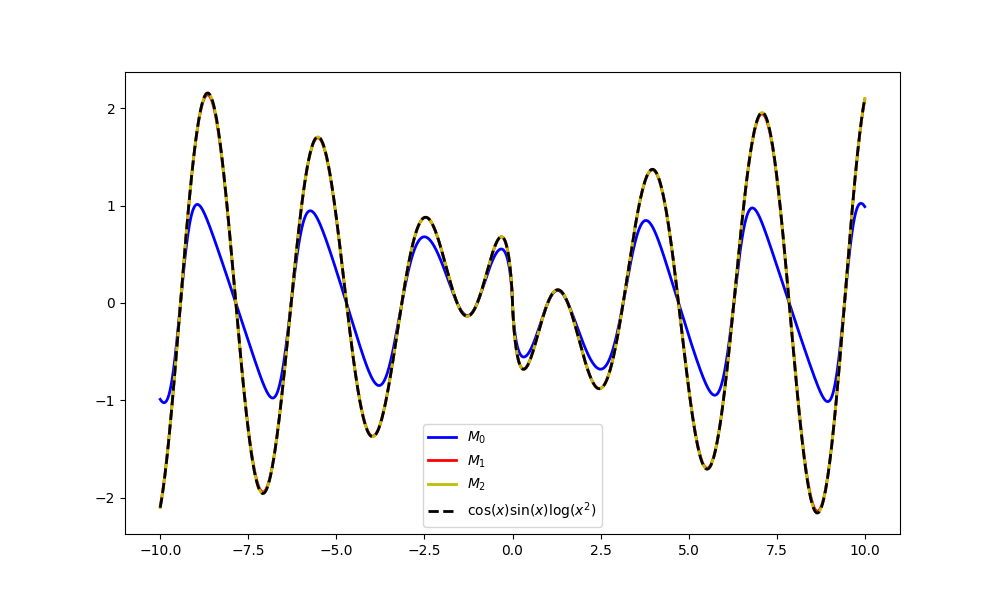}
		\caption{Approximation $M_k\left(\cos(x), \sin(x)\log(x^2)\right)$ of the product of the functions $\cos(x)$ and $\sin(x)\log(x^2)$ by the NN $\{M_k\}$ with $a=0$ and $\beta=0.27$.}
		\label{fig5}
	\end{figure}
	
	\subsection{Approximation of Monomials}
	
	\subsubsection{Deep Network Construction with Constant Width}
	This next result generalizes the previous approximation theorem from the product of two variables to arbitrary monomials $x^m$. It shows that SiLU networks can approximate any power function xmxm with exponential accuracy, using a network whose depth grows linearly with the degree $m$
	\begin{theo}
		\label{theo3}
		For every $B > 0, m\geq 2,$ and target accuracy $\varepsilon \in (0,1)$, there exist constants $C_{m} > 0$, $\omega_{m} > 1$ and a family of feedforward SiLU neural networks $\{P_{m,k}\}_{k \in \mathbb{N}}$, with at most $2m$ hidden layers with $4$ neurons each, such that for $k \geq \left\lceil\frac{\ln C_m - \ln \varepsilon}{2\ln \omega_m}\right\rceil$ 
		
		\[
		\sup_{|x| \leq B}|P_{m,k}(x) - x^m| \leq \varepsilon
		\]
		The network $P_{m,k}$ has depth $\mathcal{O}(m)$ and size $\mathcal{O}(1)$.
	\end{theo}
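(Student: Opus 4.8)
The plan is to build $P_{m,k}$ by iterating the multiplication network $M_k$ of Theorem~\ref{theo2}, using the factorization $x^m = x\cdot x^{m-1}$. Concretely I would prove by induction on $m$ the following strengthened statement: there is a SiLU network of depth $\mathcal O(m)$ and constant width sending $x\in[-B,B]$ to a pair $(\tilde x_m,\tilde p_m)$ with $|\tilde x_m-x|\le C_m\omega_m^{-2k}$ and $|\tilde p_m-x^m|\le C_m\omega_m^{-2k}$. The base case $m=2$ is $Q_k$ from Theorem~\ref{theo1} together with a channel that carries (an approximation of) $x$ forward. For the inductive step, feed $(\tilde x_{m-1},\tilde p_{m-1})$ into an $M_k$ block to produce $\tilde p_m:=M_k(\tilde x_{m-1},\tilde p_{m-1})\approx x\cdot x^{m-1}=x^m$, while simultaneously passing an approximation of $x$ to the next stage; folding the output affine map of one block into the input affine map of the next keeps the depth proportional to $m$.

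The one architectural subtlety is how to carry the value $x$ through a SiLU layer without spending extra neurons. The key observation is that the four neurons of an $M_k$ block compute $\text{SiLU}\big(\pm\beta^k(u+v)+a\big)$ and $\text{SiLU}\big(\pm\beta^k(u-v)+a\big)$: the \emph{even} combination of each $\pm$ pair produces the quadratic terms assembled into $uv$, whereas the \emph{odd} combination gives $\text{SiLU}(z+a)-\text{SiLU}(-z+a)=2(\sigma(a)+a\sigma'(a))\,z+\mathcal O(z^3)$, so an appropriate linear combination of these odd parts (with $z=\beta^k(u\pm v)$, divided by the nonzero factor $\beta^k(\text{SiLU})'(a)$, e.g. $(\text{SiLU})'(0)=\tfrac12$) reconstructs $u$ up to an error $\mathcal O(\beta^{2k})$. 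Thus one hidden layer of $4$ neurons both advances the partial power and refreshes the $x$-channel, yielding at most $\sim 2m$ such layers, i.e. depth $\mathcal O(m)$ and width $\mathcal O(1)$.

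It remains to control error propagation. All intermediate quantities $\tilde x_j,\tilde p_j$ stay in a fixed box $[-B',B']$ with $B'=\max(1,B)^m+1$ once the per-stage error is $<1$, which holds for $k$ large (a one-line bootstrap using that the error bound is decreasing in $k$); on this box Theorem~\ref{theo2} gives $|M_k(u,v)-uv|\le C(B')\omega_2^{-2k}=:\delta_k$. Writing $e_j=|\tilde p_j-x^j|$, $\eta_j=|\tilde x_j-x|$ and using $|\tilde p_{j+1}-x^{j+1}|\le|M_k(\tilde x_j,\tilde p_j)-\tilde x_j\tilde p_j|+B'\eta_j+B\,e_j$ together with $\eta_{j+1}\le\eta_j+\mathcal O(\delta_k)$ (hence $\eta_j=\mathcal O(m\delta_k)$), one unrolls over $j\le m$ to get $e_m\le C_m\omega_m^{-2k}$ for constants $C_m>0$, $\omega_m=\omega_2>1$ depending only on $m$ and $B$; taking $k\ge\big\lceil\frac{\ln C_m-\ln\varepsilon}{2\ln\omega_m}\big\rceil$ then forces $\sup_{|x|\le B}|P_{m,k}(x)-x^m|\le\varepsilon$.

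I expect the main obstacle to be precisely this last bookkeeping: keeping the accumulated error in the clean form $C_m\omega_m^{-2k}$ while simultaneously (i) certifying that the intermediate values never leave the box on which Theorem~\ref{theo2}'s constant is defined, and (ii) honestly tracking the error of the approximate identity channel, since the multiplication error and the carry error feed into each other across all $m$ stages. Everything else — the algebraic identity, the $4$-neuron width count, and the $\mathcal O(m)$ depth count — is routine given Theorems~\ref{theo1} and~\ref{theo2}.
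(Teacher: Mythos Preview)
Your proof is correct and follows the same inductive scheme as the paper's: define $P_{m+1,k}(x)=M_k(P_{m,k}(x),x)$ and bound the error by $|M_k(\cdot,\cdot)-\text{product}|+|x|\cdot|P_{m,k}(x)-x^m|$, which yields a recursion of the form $C_{m+1}\omega_{m+1}^{-2k}$ exactly as you wrote.

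The one place you go further than the paper is architectural. The paper's proof simply writes $M_k(P_{m,k}(x),x)$ without addressing how the original $x$ is available after $P_{m,k}$ has consumed it; this is resolved only afterwards, in Corollary~\ref{coro1}, by recasting the whole construction as an RNN in which $x$ is re-fed externally at each step (so in the paper's version there is no $\eta_j$ to track at all). Your odd-combination trick --- recovering $u$ from $\mathrm{SiLU}(\beta^k u+a)-\mathrm{SiLU}(-\beta^k u+a)=2(\mathrm{SiLU})'(a)\,\beta^k u+\mathcal O(\beta^{3k})$ --- shows that the same four neurons already carry an $\mathcal O(\beta^{2k})$-accurate copy of $x$ forward, so the width-$4$ feedforward claim stands without any skip connection or external re-injection. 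That is a genuine refinement; the price is exactly the coupled $e_j$/$\eta_j$ bookkeeping you correctly identified as the main obstacle, and your handling of it (bootstrap into a fixed box $[-B',B']$, then $\eta_j=\mathcal O(m\delta_k)$ and $e_{j+1}\le \delta_k+B'\eta_j+B\,e_j$) is sound.
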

	\begin{proof}
		By induction on \(m\). For \(m=2\) use Theorem~\ref{theo1} directly.
		
		Assume \(P_{m,k}\) satisfies \(|P_{m,k}(x)-x^m|\le C_m\omega_m^{-2k}\) for \(|x|\le B\) with \(\mathcal{O}(m)\) depth, \(\mathcal{O}(1)\) size. Define
		\[
		P_{m+1,k}(x)=M_k(P_{m,k}(x),x),
		\]
		where \(M_k\) is the product network from Theorem~\ref{theo2}. Since both inputs are bounded by \(B'=B+\max_{|x|\le B}|P_{m,k}(x)|\), we have
		\begin{align*}
			&|P_{m+1,k}(x)-x^{m+1}|\\
			&\le |M_k(P_{m,k}(x),x)-P_{m,k}(x)x| + |P_{m,k}(x)-x^m||x|\\
			&\le (C_2 + B'C_m)\omega_{m+1}^{-2k},
		\end{align*}
		with \(\omega_{m+1} =\min\{\omega_2,\omega_m\}\). Depth increases by \(\mathcal{O}(1)\), size remains \(\mathcal{O}(1)\). The choice 
		$$ k\geq \left\lceil\frac{\ln C_{m+1} - \ln \varepsilon}{2\ln \omega_{m+1}}\right\rceil$$ 
		yields error \(\varepsilon\).
	\end{proof}
	\noindent In practice (Fig.~\ref{fig6}), we achieve an approximation of the function \(x^7\) using the NN \(P_{m,k}\) with parameters \(a=0\) and \(\beta=0.27\).
	
	\begin{figure}[h!]
		\centering
		\includegraphics[width=\linewidth, height=6cm, keepaspectratio]{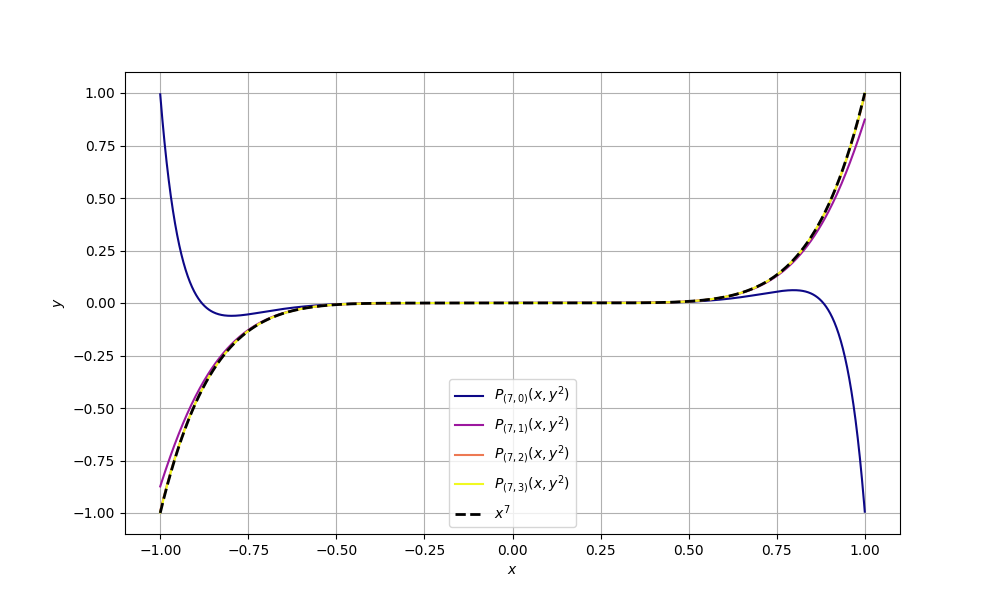}
		\caption{Approximation of the function \(x^7\) by the functions \(P_{m,k}\) with $a=0$ and $\beta=0.27$ (Theorem \ref{theo3}).}
		\label{fig6}
	\end{figure}
	
	\subsubsection{Shallow Construction for Arbitrary Monomials}
	
	While Theorem~\ref{theo3} approximates $x^m$ using a deep network with $\mathcal{O}(m)$ layers, we now present a more efficient shallow construction requiring only a single hidden layer. This architectural improvement is made possible by exploiting the finite difference properties of the SiLU activation function, which allows the polynomial approximation to be compressed into a single nonlinear transformation while maintaining exponential convergence. The shift from depth to moderate width offers practical advantages where depth-related issues like vanishing gradients are a concern.
	
	\begin{theo}
		\label{theo4}
		For any $B > 0$ and target accuracy $\varepsilon \in (0,1)$ , there exist constants $C_{1,m} > 0$, $\omega > 1$ and a family of feedforward SiLU neural networks $\{Q^m_k\}_{k \in \mathbb{N}}$, with one hidden layer with $m+1$ neurons each, such that for $k = \left\lceil  \frac{\ln C_{1,m} -ln\varepsilon}{\ln \omega}\right\rceil$
		\[
		\sup_{|x| \leq B} |Q^m_k(x) - x^m| \leq \varepsilon,
		\]
		
		Moreover, the network $Q^m_k$ has depth $\mathcal{O}(1)$ and size $\mathcal{O}(m)$.
	\end{theo}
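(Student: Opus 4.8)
The plan is to build $Q^m_k$ as a normalized $m$-th order finite difference of $\text{SiLU}$ taken in the \emph{weight} variable, generalizing the second central difference $g_a$ used in Theorem~\ref{theo1}. First I would fix a point $a\in\mathbb{R}$ with $\text{SiLU}^{(m)}(a)\neq 0$; such a point exists because $\text{SiLU}$ is real-analytic and not a polynomial, so no derivative of it vanishes identically (one cannot simply take $a=0$ here: e.g.\ $\text{SiLU}^{(3)}(0)=0$, and more generally $\text{SiLU}^{(m)}(0)=0$ for every odd $m\ge 3$). With $\beta\in(0,1)$ to be chosen and $\omega=\beta^{-1}>1$, let the $m+1$ hidden neurons compute $\text{SiLU}\bigl(i\beta^{k}x+a\bigr)$ for $i=0,1,\dots,m$, and set
\[
Q^m_k(x)=\frac{1}{\text{SiLU}^{(m)}(a)\,\beta^{km}}\sum_{i=0}^{m}(-1)^{m-i}\binom{m}{i}\,\text{SiLU}\bigl(i\beta^{k}x+a\bigr).
\]
This is a one-hidden-layer SiLU network with $m+1$ neurons, input weights $i\beta^{k}$, common bias $a$, and output weights $(-1)^{m-i}\binom{m}{i}\big/\bigl(\text{SiLU}^{(m)}(a)\beta^{km}\bigr)$, hence of depth $\mathcal{O}(1)$ and size $\mathcal{O}(m)$.

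Next I would expand each neuron about $a$, $\text{SiLU}(i\beta^{k}x+a)=\sum_{n\ge 0}\frac{\text{SiLU}^{(n)}(a)}{n!}(i\beta^{k})^{n}x^{n}$, and interchange the absolutely convergent sums. The finite-difference identity $\sum_{i=0}^{m}(-1)^{m-i}\binom{m}{i}i^{n}=m!\,S(n,m)$, with $S(n,m)$ the Stirling number of the second kind, kills every term with $n<m$ (where $S(n,m)=0$) and normalizes the $x^{m}$ term to $1$ (where $S(m,m)=1$), leaving
\[
Q^m_k(x)=x^{m}+\sum_{n>m}d_{n}\,\beta^{k(n-m)}x^{n},\qquad
d_{n}=\frac{m!\,S(n,m)\,\text{SiLU}^{(n)}(a)}{n!\,\text{SiLU}^{(m)}(a)}.
\]
Then, using the derivative bound \ref{bound}, $\bigl|\text{SiLU}^{(n)}(a)/n!\bigr|\le C_{a}\rho^{n}$ with $\rho=2/\pi$, together with $S(n,m)\le m^{n}/m!$, I get $|d_{n}|\le \widetilde{C}_{a,m}\,(m\rho)^{n}$ with $\widetilde{C}_{a,m}=C_{a}/|\text{SiLU}^{(m)}(a)|$, so that for $|x|\le B$
\[
|Q^m_k(x)-x^{m}|\le \widetilde{C}_{a,m}\sum_{n>m}(m\rho B)^{n}\beta^{k(n-m)}
=\widetilde{C}_{a,m}(m\rho B)^{m}\,\frac{m\rho B\,\beta^{k}}{1-m\rho B\,\beta^{k}}
\]
whenever $m\rho B\,\beta^{k}<1$. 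Choosing $\beta$ small enough that $m\rho B\,\beta\le\tfrac12$ makes the right-hand side at most $C_{1,m}\beta^{k}=C_{1,m}\omega^{-k}$ for all $k\ge 1$, and then $k\ge\lceil(\ln C_{1,m}-\ln\varepsilon)/\ln\omega\rceil$ forces the error below $\varepsilon$, which is exactly the stated bound.

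The mechanism is the same finite-difference-plus-geometric-tail argument as in Theorem~\ref{theo1}, only with an $m$-th order difference instead of a second one, so the routine estimates carry over. The one genuinely new point, and the step I expect to require the most care, is the nondegeneracy: one must exhibit an admissible shift $a$ with $\text{SiLU}^{(m)}(a)\neq 0$ and, for the quantitative statement, control how $|\text{SiLU}^{(m)}(a)|^{-1}$ and the admissible range of $\beta$ (hence the constants $C_{1,m},\omega$) depend on $m$. A clean way to handle this is to take any fixed $a$ off the discrete zero set of $\text{SiLU}^{(m)}$ (or one maximizing $|\text{SiLU}^{(m)}(a)|$) and absorb the resulting $m$-dependence into $C_{1,m}$.
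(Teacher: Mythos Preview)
Your argument is correct and is essentially the paper's proof: an $m$-th order finite difference of $\text{SiLU}$ at a shift $a$ with $\text{SiLU}^{(m)}(a)\neq 0$, Taylor expansion to cancel the low-order terms, the Cauchy bound \ref{bound} for the tail, and the $\beta^{k}$-rescaling to make the remainder $\mathcal{O}(\omega^{-k})$. The only differences are cosmetic: the paper uses centered nodes $j-\tfrac{m}{2}$ instead of your forward nodes $0,\dots,m$ (which sharpens the tail rate to $(m/\pi)^{n}$ versus your $(2m/\pi)^{n}$) and bounds the finite difference directly rather than via Stirling numbers, and it writes the normalizing constant as $K_m(a)=a\sigma^{(m)}(a)+m\sigma^{(m-1)}(a)$, which is exactly your $\text{SiLU}^{(m)}(a)$ by Leibniz.
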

	\begin{proof}
		For \(m\ge 1\) and \(a\in\mathbb{R}\), define
		\[
		g_{a,m}(x)=\sum_{j=0}^{m}(-1)^{m-j}\binom{m}{j}
		\mathrm{SiLU}\!\bigl((j-\tfrac{m}{2})x+a\bigr),
		\]
		where \(\mathrm{SiLU}(t)=t\sigma(t)\) and \(\sigma(t)=(1+e^{-t})^{-1}\).
		
		Expanding \(\sigma\) in Taylor series around \(a\) gives
		\begin{align*}
			\mathrm{SiLU}\!\bigl((j-\tfrac{m}{2})x+a\bigr)
			&= a\sigma(a)+\sum_{n\ge1}\frac{(j-\tfrac{m}{2})^n}{n!}
			\bigl[a\sigma^{(n)}(a)+n\sigma^{(n-1)}(a)\bigr]x^n.
		\end{align*}
		Summing over \(j\) with alternating binomial coefficients cancels all terms with \(n<m\) (the discrete \(m\)-th difference of a polynomial of degree \(<m\) vanishes). Hence
		\[
		g_{a,m}(x)=K_m(a)x^m+\sum_{n>m}A_n x^n,
		\]
		where \(K_m(a)=a\sigma^{(m)}(a)+m\sigma^{(m-1)}(a)\neq0\) and
		\(A_n=\frac{\Delta^m p_n(0)}{n!K_m(a)}\bigl[a\sigma^{(n)}(a)+n\sigma^{(n-1)}(a)\bigr]\)
		with \(p_n(t)=(t-\tfrac{m}{2})^n\).
		
		Set \(f_{a,m}(x)=g_{a,m}(x)/K_m(a)=x^m+\sum_{n>m}A_n x^n\). For \(\beta\in(0,1)\) define
		\[
		Q^m_k(x)=\beta^{-mk}f_{a,m}(\beta^k x)=x^m+\sum_{n>m}A_n\beta^{k(n-m)}x^n.
		\]
		Using the bound \(|\Delta^m p_n(0)|\le 2^m(m/2)^n\) (\ref{diff}) and
		\(|a\sigma^{(n)}(a)+n\sigma^{(n-1)}(a)|\le C_a(2/\pi)^n n!\) (\ref{bound}), we obtain
		\(|A_n|\le C_1 (m/\pi)^n\) with \(C_1=2^mC_a/|K_m(a)|\). 
		
		Hence for \(|x|\le B\),
		\begin{align*}
			|Q^m_k(x)-x^m| & \le C_1\beta^{-mk}\sum_{n>m}\bigl(\tfrac{mB}{\pi}\beta^k\bigr)^{\!n}
			\\
			& \le C_1\beta^{-mk}\bigl(\tfrac{mB}{\pi}\beta^k\bigr)^{m+1}\sum_{n\geq 0}\bigl(\tfrac{mB}{\pi}\beta^k\bigr)^{\!n}\\
			& \le C_1\beta^{k}\bigl(\tfrac{mB}{\pi}\bigr)^{m+1}\frac{1}{1 - \frac{mB \beta^k}{\pi}}\\
			& \le C_{1,m}\,\beta^k,
		\end{align*}
		where \(C_{1,m}=2 C_1 \left(\frac{mB}{\pi}\right)^{m+1}\) provided \(\beta^k mB/\pi<1/2\). 
		Thus
		$$	\sup_{|x| \leq B} |Q^m_k(x)-x^m |\le C_{1,m}\omega^{-k},$$
		with \(\omega=\beta^{-1}\).
		Choosing \(k=\mathcal{O}(\ln 1/\varepsilon)\) yields error \(\varepsilon\). The network \(Q^m_k\) is a single hidden layer SiLU network with \(m+1\) neurons:
		\[
		Q^m_k(x)=\gamma_m\cdot\mathrm{SiLU}(W_m x+b_m),
		\]
		where \(W_m=\beta^k[w_0,\dots,w_m]^\top\) with \(w_j=j-\tfrac{m}{2}\),
		\(\gamma_m=\beta^{-mk}K_m(a)^{-1}[c_0,\dots,c_m]\) with \(c_j=(-1)^{m-j}\binom{m}{j}\),
		and \(b_m=a\mathbf{1}_{m+1}\),
		where \( \mathbf{1}_{m+1} = [1, 1, \dots, 1]^\top \in \mathbb{R}^{m+1} \) is the all-ones vector of dimension \( m+1 \).
	\end{proof}

	\begin{rem}
		For $a=0$, we have $K_m(0)=m f^{(m-1)}(0)$ where $f(x)=\sigma(x)-1/2$ is odd. Consequently, if $m$ is odd and $m \geq 3$, then $f^{(m-1)}(0)=0$ and thus $K_m(0)=0$.
	\end{rem}

	\subsubsection{A Natural Recurrent Neural Network (RNN) Construction}
	
	A crucial observation emerges from our construction: approximation network parameters are naturally shareable.
	Consequently,
	\begin{coro}
		\label{coro1}
		The resulting network \(P_{m,k}\) of Theorem~\eqref{theo3} is a shallow Recurrent Neural Network (RNN) with at most $2m$ hidden layer of $4$ neurons per step $i$ whose parameters are fixed.
	\end{coro}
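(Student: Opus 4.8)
The plan is to expose the parameter sharing that is already latent in the proof of Theorem~\ref{theo3} and then read the resulting unrolled network as a recurrent one. First I would recall the recursion underlying $P_{m,k}$: one sets $P_{2,k}=Q_k$ (the square network of Theorem~\ref{theo1}) and, for $j\ge 2$,
\[
P_{j+1,k}(x)=M_k\bigl(P_{j,k}(x),\,x\bigr),
\]
where $M_k$ is the multiplication network of Theorem~\ref{theo2}. The decisive observation is that $M_k$ is literally the same network at every level $j$: the explicit formulas at the end of the proof of Theorem~\ref{theo2} give its weight matrix, output vector $\gamma=\tfrac{\beta^{-2k}}{4K(a)}[1,1,-1,-1]^\top$ and bias $b=(a,a,a,a)^\top$ in terms of the triple $(k,\beta,a)$ only, with no dependence on $j$ or on the target degree $m$.

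Next I would cast this as an RNN with tied weights. Take the external input, re-supplied at each time step $i=2,\dots,m$, to be $x$; take the hidden state to be the scalar $p_i\in\mathbb{R}$, initialized by $p_2=Q_k(x)$; and let the transition be $p_{i+1}=F(p_i,x):=M_k(x,p_i)$. Since $M_k$ is a single-hidden-layer SiLU network with $4$ neurons and frozen parameters, the cell $F$ is realized by the corresponding fixed SiLU layer of $4$ neurons, shared across all steps — which is precisely the definition of a (shallow) recurrent cell with fixed parameters. Unrolling $F$ for $m-2$ steps reproduces $P_{m,k}$ exactly, so the depth and width accounting of Theorem~\ref{theo3} carries over verbatim (at most $2m$ hidden layers of $4$ neurons, comprising the input-to-state map $x\mapsto Q_k(x)$ and the affine/SiLU pair contributed by each recurrent step), and the bound $\sup_{|x|\le B}|P_{m,k}(x)-x^m|\le\varepsilon$ for $k=\mathcal{O}(\ln 1/\varepsilon)$ is inherited directly from Theorem~\ref{theo3}.

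The one point I would take care to state cleanly is the formalization of "recurrent network with fixed parameters", namely that a single parametrized block $F_\theta$ with one fixed $\theta$ really does implement the transition at \emph{every} step. This amounts to checking that nothing in $M_k$ secretly depends on the step index $j$: the only quantity in the proof of Theorem~\ref{theo3} that grows with $j$ is the running bound $B'=B+\max_{|x|\le B}|P_{j,k}(x)|$ on the inputs, but it enters only the error analysis and not the network weights, so the architecture itself is genuinely step-independent. (If one prefers an RNN that receives $x$ only at the first step, the input can instead be threaded through each cell at no extra nonlinearity via the exact identity $x=\text{SiLU}(x)-\text{SiLU}(-x)$, which costs two passthrough neurons.) Beyond this bookkeeping I do not expect any substantive obstacle: the corollary is in essence a structural observation about the network produced by Theorem~\ref{theo3}.
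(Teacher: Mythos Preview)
Your argument is correct and matches the paper's: both identify that the multiplication block $M_k$ has step-independent parameters $(W,b,\gamma)$ depending only on $(k,\beta,a)$, so the unrolled $P_{m,k}$ is a tied-weight RNN with a $4$-neuron SiLU cell. The only noteworthy difference is the initialization: the paper takes the even simpler choice $y_0=1$ and lets the \emph{same} cell $y_i=M_k(x,y_{i-1})$ run for all $i\ge 1$ (so $y_1\approx x,\ y_2\approx x^2,\dots$), which avoids your separate $Q_k$ start and, as a bonus, outputs the full basis $\{1,x,\dots,x^m\}$ along the way; your version is equally valid but slightly less economical.
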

	\begin{proof}
		The parameters remain unchanged $W_i\equiv W,\; b_i\equiv b,\; \gamma_i\equiv\gamma$ for all $i$, making the architecture recurrent: it is exactly a Recurrent Neural Network (RNN) where the same operator is applied $m$ times. 
		Our specific architecture is:
		\[
		z_i = [x,\ y_{i-1}]^\top \quad \text{and} \quad y_i = \gamma \cdot \text{SiLU}(W z_i + b)
		\]
		The equations of our specific RNN at step \(i\) are:
		
		\begin{align}
			\nonumber
			\mathbf{h}_i &= \text{SiLU}\left( \mathbf{W}_h \cdot x + \mathbf{U}_h \cdot y_{i-1} + \mathbf{b}_h \right) \\
			\label{eq:RNN}
			y_i &= \mathbf{W}_y \cdot \mathbf{h}_i 
		\end{align}
		With the constrained parameters:
		\begin{equation}
			\label{eq:RNN_parameters}
			\mathbf{W}_h = \beta^{k} \begin{bmatrix} +1 \\ -1 \\ +1 \\ -1 \end{bmatrix}, \ \mathbf{U}_h = \beta^{k} \begin{bmatrix} +1 \\ -1 \\ -1 \\ +1 \end{bmatrix} \ \mathbf{W}_y = \frac{\beta^{-2k}}{4K(a)} \begin{bmatrix} +1 \\ +1 \\ -1 \\ -1 \end{bmatrix},
		\end{equation}
		and $\mathbf{b}_h = \begin{bmatrix} a & a & a & a \end{bmatrix}.$
		
		The initial state is \(y_0 = 1\).
		\(P_{m,k}\) is indeed a shallow RNN (a $2m$ hidden layer of $4$ neurons per step) whose parameters are fixed, shared, and not learned in this context, as they were chosen analytically to compute a specific function (\(x^m\)) rather than learned from data.
	\end{proof}
	\noindent We implement in practice an approximation of \(x^7\) with a recurrent neural network Eq.~\eqref{eq:RNN} (parameters: \(k=3\), \(a=0\), \(\beta=0.27\) as per Eq.~\eqref{eq:RNN_parameters}). The Figure~\ref{fig7} reveals that each hidden layer approximates a monomial of increasing degree.
	\begin{figure}[h!]
		\centering
		\includegraphics[width=\linewidth, height=6cm, keepaspectratio]{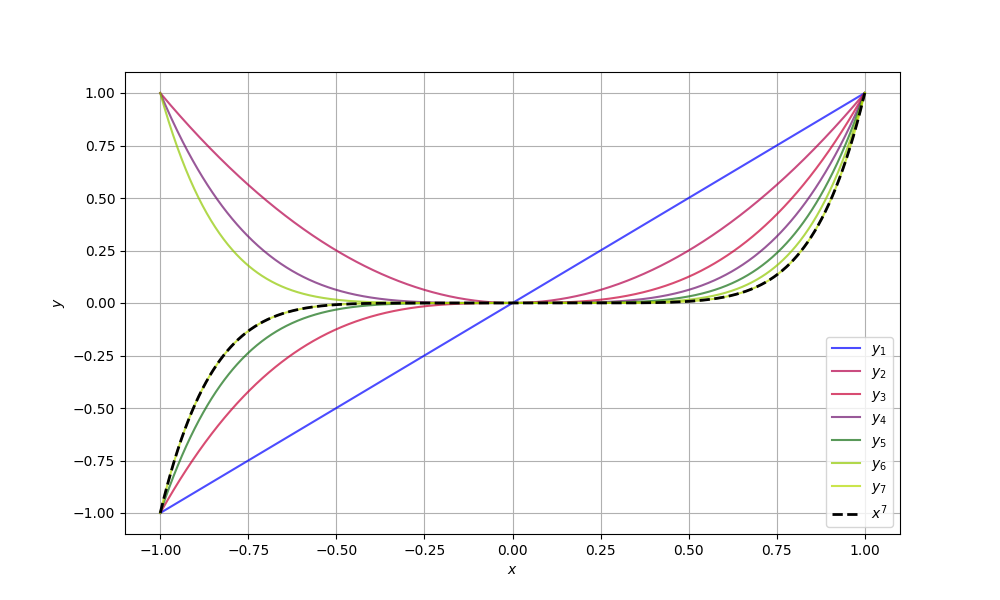}
		\caption{Approximation of the $x^7$ function by a recurrent neural network \eqref{eq:RNN} with parameters given by Eq.~ \eqref{eq:RNN_parameters}($k=3$, $a=0$ and $\beta=0.27$).}
		\label{fig7}
	\end{figure}

	\subsection{Approximation of Arbitrary Polynomials}
	
	The efficient approximation of monomials established in the previous theorems directly extends to arbitrary polynomials. The following result shows that any polynomial can be approximated by a SiLU network with exponential accuracy, where the network depth scales linearly with the polynomial degree while the width remains constant.

	\begin{theo}
		\label{theo5}
		Let $N(x) = \sum_{m=0}^M a_m x^m$ be a polynomial of degree $M$. For every $B > 0$ and target accuracy $\varepsilon \in (0,1)$, there exist constants $C_{M} > 0$, $\omega_{M} > 1$ and a family of SiLU neural network $\{\mathcal{N}_{M,k}\}_{k\in\mathbb{N}}$, with at most $2M$ hidden layers with $4$ neurons each, such that for $k = \left\lceil\frac{\ln (C'_M \cdot \max_m |a_m|) - \ln \varepsilon}{2\ln \omega'_{M}}\right\rceil$  
		\[
		\sup_{|x| \leq B} |\mathcal{N}_{M,k}(x) - N(x)| \leq \varepsilon
		\]
		The network has depth $\mathcal{O}(M)$ and size $\mathcal{O}(1)$.
	\end{theo}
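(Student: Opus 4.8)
The plan is to combine the monomial networks $P_{m,k}$ from Theorem~\ref{theo3} in parallel, one for each nonzero coefficient $a_m$, and then sum their outputs with weights $a_m$. First I would set up the construction: for each $m \in \{2,\dots,M\}$ (the linear and constant terms $a_0 + a_1 x$ need no approximation and can be carried exactly through an affine skip connection or absorbed into the final linear layer), invoke Theorem~\ref{theo3} to obtain $P_{m,k}$ with $\sup_{|x|\le B}|P_{m,k}(x)-x^m|\le C_m \omega_m^{-2k}$, depth $\mathcal{O}(m)$, and width $\mathcal{O}(1)$. Define $\mathcal{N}_{M,k}(x) = a_0 + a_1 x + \sum_{m=2}^{M} a_m P_{m,k}(x)$.

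Next I would bound the error by the triangle inequality:
\[
\sup_{|x|\le B}\bigl|\mathcal{N}_{M,k}(x)-N(x)\bigr|
\le \sum_{m=2}^{M} |a_m|\,\sup_{|x|\le B}\bigl|P_{m,k}(x)-x^m\bigr|
\le \Bigl(\max_m |a_m|\Bigr)\sum_{m=2}^{M} C_m \omega_m^{-2k}.
\]
Setting $\omega'_M = \min_{2\le m\le M}\omega_m > 1$ and $C'_M = \sum_{m=2}^{M} C_m$, this is at most $(\max_m|a_m|)\, C'_M\, (\omega'_M)^{-2k}$, so the stated choice $k = \lceil (\ln(C'_M \max_m|a_m|) - \ln\varepsilon)/(2\ln\omega'_M)\rceil$ forces the error below $\varepsilon$.

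For the architectural claims I would note two options. The cleanest is to run the $M-1$ subnetworks $P_{2,k},\dots,P_{M,k}$ in parallel branches, padding each shallower branch with identity layers (realizable up to negligible error, or exactly if one allows a linear pass-through channel) so that all branches have the common depth $\mathcal{O}(M)$; stacking them gives width $\sum_{m=2}^M \mathcal{O}(1) = \mathcal{O}(M)$, and a final linear layer computes the weighted sum $a_0 + a_1 x + \sum a_m P_{m,k}$. Alternatively, reuse the RNN/shared-parameter observation of Corollary~\ref{coro1}: since $P_{m+1,k}=M_k(P_{m,k},x)$, a single recurrent pass already produces all of $x^2,\dots,x^M$ as intermediate states, so one can tap each state $y_m$, scale by $a_m$, and accumulate — this keeps width $\mathcal{O}(1)$ and depth $\mathcal{O}(M)$, matching the theorem statement exactly. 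I would present the second construction as the primary one since it yields the advertised $\mathcal{O}(1)$ size per layer and $\mathcal{O}(M)$ depth without parallel padding.

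\textbf{Main obstacle.} The one genuinely nonroutine point is reconciling the width/depth bookkeeping: the theorem claims "$4$ neurons each" and "size $\mathcal{O}(1)$," which is only literally true for the recurrent construction where the accumulator is folded into the $4$-neuron recurrent cell. I would need to check that the running partial sum $\sum_{m'=2}^{m} a_{m'} x^{m'}$ can be carried alongside the two channels $(x, y_{m-1})$ without inflating the per-step width beyond a constant — adding one extra linear accumulator coordinate suffices, so "$4$ neurons" should perhaps read "$\mathcal{O}(1)$ neurons," and I would state it that way. The error propagation through the accumulator is harmless since it is a pure linear sum of already-controlled terms, so no new error analysis beyond the triangle inequality above is required.
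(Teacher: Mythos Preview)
Your proposal is correct and follows essentially the same route as the paper: define $\mathcal{N}_{M,k}$ as the $a_m$-weighted sum of the monomial networks $P_{m,k}$ from Theorem~\ref{theo3}, bound the error by the triangle inequality as $(\max_m|a_m|)\sum_m C_m\,\omega_m^{-2k}$, set $\omega'_M=\min_m\omega_m$ and $C'_M=\sum_m C_m$, and invoke the recurrent structure of Corollary~\ref{coro1} for the $\mathcal{O}(M)$-depth, constant-width architecture. Your separate handling of $a_0+a_1x$ and your remark that carrying the running partial sum alongside $(x,y_{m-1})$ needs one extra linear channel (so ``$4$ neurons'' should really be read as ``$\mathcal{O}(1)$ neurons'') are points the paper simply glosses over.
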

	\begin{proof}
		For each \(m=0,\dots,M\), let \(P_{m,k}\) be the SiLU network approximating \(x^m\) on \([-B,B]\) with error \(\le C_m\omega_m^{-2k}\), depth \(\mathcal{O}(m)\), and constant size (Theorem~\ref{theo3}).
		
		Define \(\mathcal{N}_{M,k}(x)=\sum_{m=0}^M a_m P_{m,k}(x)\). Then
		\begin{align*}
			|\mathcal{N}_{M,k}(x)-N(x)|
			&\le \sum_{m=0}^M |a_m|\,|P_{m,k}(x)-x^m|\\
			&
			\le \Bigl(\max_m |a_m|\Bigr) \sum_{m=0}^M C_m\omega_m^{-2k}.
		\end{align*}
		Setting \(C'_M = (\max_m |a_m|)\sum_{m=0}^M C_m\) and \(\omega'_M = \min_m\omega_m\) yields
		\[
		|\mathcal{N}_{M,k}(x)-N(x)| \le C'_M \omega_{M}^{'-2k}.
		\]
		
		The networks \(P_{m,k}\) can be evaluated jointly in a single recurrent structure of depth \(\mathcal{O}(M)\) and constant width; adding the final linear combination preserves these bounds. Hence \(\mathcal{N}_{M,k}\) has depth \(\mathcal{O}(M)\), constant size, and achieves error \(\varepsilon\) for 
		$$k = \left\lceil\frac{\ln (C'_M \cdot \max_m |a_m|) - \ln \varepsilon}{2\ln \omega'_{M}}\right\rceil$$.
	\end{proof}
	
	\noindent In the same spirit, to approximate \(x^2+x+1\) using the neural network \(\mathcal{N}_{2,k}\) with parameters \(k=3\), \(a=0\), \(\beta=0.27\), We employ the network \(P_{2,k}\) to approximate the monomial \(x^2\). According to Corollary~\ref{coro1}, since \(P_{2,k}\) is an RNN, its hidden and output layers approximate the constant function \(1\), the identity function \(x\), and $x^2$ respectively. Consequently, the	sum of the outputs of these layers accurately approximates the polynomial \(x^2+x+1\) (Fig~\ref{fig8}).

	\begin{figure}[htb]
		\centering
		\includegraphics[width=\linewidth, height=6cm, keepaspectratio]{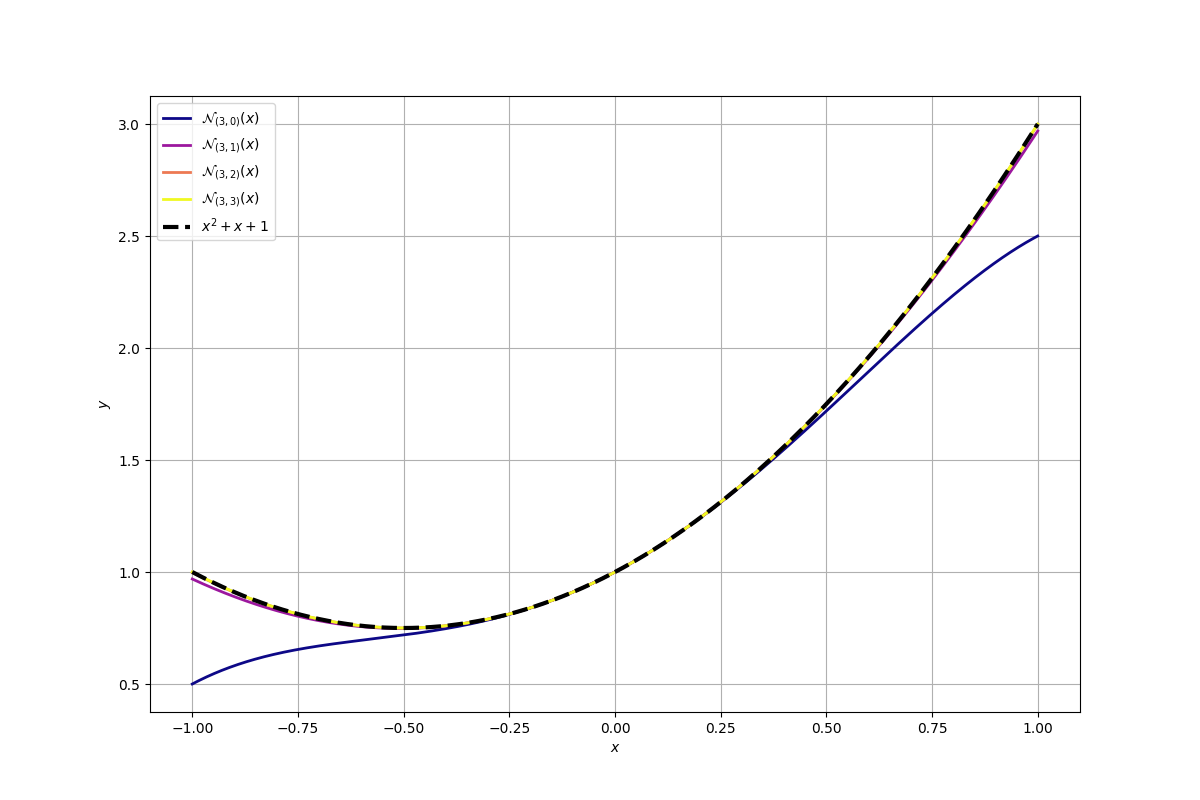}
		\caption{Approximation of the function \(x^2+x+1\) by the NN \(\mathcal{N}_{M,k}\) (Corollary \ref{coro1}).}
		\label{fig8}
	\end{figure}

	\begin{rem}
		In the case where the parameters \(c_i\) are to be learned, if we have a dataset \((x_j, y_j), j = 1, 2, \cdots, n\) such that \(y_j = N(x_j)\), then we can approximate the polynomial \(y_j\) at \(x_j\) by:
		\[
		\hat{y}_j = \begin{bmatrix}
			P_{0,k}(x_j) & P_{1,k}(x_j) & \cdots & P_{M,k}(x_j)
		\end{bmatrix} \mathbf{c}
		, 
		\]
		with $\mathbf{c}^\top = [c_0\  c_1\ \ldots,\ c_M].$
		The predicted $\mathbf{\hat{y}}\in\mathbb{R}^{n}$ output over the entire dataset can be written as a matrix product:
		\[
		\mathbf{\hat{y}} = \mathbf{P} \mathbf{c},
		\]
		where
		\[
		\mathbf{P} = \begin{bmatrix}
			P_{0,k}(x_1) & P_{1,k}(x_1) & \cdots & P_{M,k}(x_1) \\
			P_{0,k}(x_2) & P_{1,k}(x_2) & \cdots & P_{M,k}(x_2) \\
			\vdots & \vdots & \ddots & \vdots \\
			P_{0,k}(x_n) & P_{1,k}(x_n) & \cdots & P_{M,k}(x_n)
		\end{bmatrix}\in \mathbb{R}^{n \times (M+1)}.
		\]
		and $P_{0,k}(x_j)=1$, for all  $j\in\{1, 2, \cdots, n\}$.
		
		Obtaining the optimal parameters is subject to minimizing the mean squared error:
		\[
		\|\mathbf{\hat{y}} - \mathbf{P} \mathbf{c}\|_2.
		\]
	\end{rem}
	
	The efficient approximation of monomials established above serves as a fundamental building block for approximating more general function classes. We now extend these results to continuous functions and then to functions with higher regularity, deriving corresponding approximation rates for SiLU networks.
	
	\subsection{Approximation of Continuous Functions}
	
	This subsection establishes approximation rates for arbitrary continuous functions using SiLU networks. Building on the efficient polynomial approximation results from Section~\ref{sec3}, we present two methods: (i)a step-function approach whose complexity is governed by the modulus of continuity of the target function, and (ii)direct polynomial approximation via the Weierstrass theorem.
	
	The following theorem formalizes the step-function approach, which provides explicit depth and size bounds in terms of the desired accuracy $\varepsilon$ and the function's regularity.

	Let \( K \subset \mathbb{R} \) be a compact set. We denote by \( C(K) \) the space of continuous functions on \( K \), equipped with the uniform norm
	\[
	\| f \|_\infty = \sup_{x \in K} |f(x)|.
	\]
	
	We define, for \( \alpha \in \mathbb{R} \), and \( \tau > 0 \), \( \kappa > 0 \), the function:
	\[
	\phi^+_{\alpha,\kappa, \tau}(x)
	= w\bigg[ \text{SiLU}\left( \kappa\left(1 - 2w\, \text{SiLU}\left(  \frac{\kappa(\alpha - x)}{\tau} \right) \right) \right) \bigg]
	\]
	where \( w = 1/\kappa \).
	
	Finally, for \( N \in \mathbb{N}^* \), let
	\begin{align*}
		\mathcal{F}_N = \bigg\{ \sum_{i=1}^N c_i \, \phi_{\alpha_i, \beta_i,\tau, \kappa}(x) \ \bigg| \ c_i \in \mathbb{R}, \beta_i -\alpha_i,\ \tau, \ \kappa \in \mathbb{R}_+\bigg\},
	\end{align*}
	and $ 
	\mathcal{F} = \bigcup_{N \ge 1} \mathcal{F}_N$,
	such that
	\[
	\phi_{\alpha,\beta,\tau,\kappa}(x) = \phi^+_{\alpha,\tau,\kappa}(x) - \phi^+_{\beta,\tau,\kappa}(x)
	\]
	
	We have the following theorem:
	\begin{theo}
		\label{theo6}
		The set \( \mathcal{F} \) is dense in \( \big( C(K),\, \| \cdot \|_\infty \big) \).
	\end{theo}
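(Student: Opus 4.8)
The plan is to recognize each building block $\phi^+_{\alpha,\kappa,\tau}$, for $\kappa$ large, as a uniformly accurate smoothing of an explicit piecewise-linear ramp, and then to approximate an arbitrary $f\in C(K)$ by a finite linear combination of the associated trapezoidal bumps --- the classical step-function argument, but carried out with continuous profiles so that the approximants genuinely belong to $\mathcal F$.

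First I would establish a quantitative ``SiLU $\to$ ReLU'' estimate. Writing $R_\kappa(y):=w\,\mathrm{SiLU}(\kappa y)=y\,\sigma(\kappa y)$ with $w=1/\kappa$, the pointwise bound $|R_\kappa(y)-\mathrm{ReLU}(y)|\le |y|e^{-\kappa|y|}$ together with $\sup_{t\ge0}te^{-\kappa t}=\tfrac{1}{e\kappa}$ yields the global estimate $\|R_\kappa-\mathrm{ReLU}\|_{L^\infty(\mathbb{R})}\le \tfrac{1}{e\kappa}$. Unfolding the two nested copies of $\mathrm{SiLU}$ inside $\phi^+_{\alpha,\kappa,\tau}$ (so that $\phi^+_{\alpha,\kappa,\tau}(x)=R_\kappa(1-2R_\kappa((\alpha-x)/\tau))$) and using that $\mathrm{ReLU}$ is $1$-Lipschitz, this upgrades to
\[
\bigl\|\phi^+_{\alpha,\kappa,\tau}-r_{\alpha,\tau}\bigr\|_{L^\infty(\mathbb{R})}\le \frac{3}{e\kappa},\qquad r_{\alpha,\tau}(x):=\mathrm{ReLU}\!\bigl(1-2\,\mathrm{ReLU}((\alpha-x)/\tau)\bigr),
\]
where $r_{\alpha,\tau}$ is the ramp equal to $0$ on $(-\infty,\alpha-\tfrac{\tau}{2}]$, affine on $[\alpha-\tfrac{\tau}{2},\alpha]$, and $1$ on $[\alpha,\infty)$; consequently, for $\beta-\alpha>\tfrac{\tau}{2}$, $\phi_{\alpha,\beta,\tau,\kappa}$ lies within $\tfrac{6}{e\kappa}$ of the trapezoidal bump $\rho_{\alpha,\beta,\tau}:=r_{\alpha,\tau}-r_{\beta,\tau}$, which takes values in $[0,1]$, equals $1$ on $[\alpha,\beta-\tfrac{\tau}{2}]$, and vanishes outside $(\alpha-\tfrac{\tau}{2},\beta)$.

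Next, given $f\in C(K)$ and $\varepsilon>0$, I would set $[a,b]:=[\min K,\max K]$, extend $f$ to $\tilde f\in C([a,b])$ with the same sup-norm and modulus of continuity (e.g.\ by affine interpolation across the complementary intervals), pick $\delta>0$ with $\omega_{\tilde f}(\delta)\le\varepsilon/4$, choose $N$ with $(b-a)/N\le\delta$ and knots $t_j=a+j(b-a)/N$, fix $0<\tau\le (b-a)/N$, and form $g:=\sum_{j=1}^{N}\tilde f(t_{j-1})\,\rho_{t_{j-1},\beta_j,\tau}$ with $\beta_j=t_j$ for $j<N$ and $\beta_N=b+\tau$. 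A direct check shows that $g$ equals $\tilde f(t_{j-1})$ on each flat piece and the convex combination $(1-s)\tilde f(t_{j-1})+s\,\tilde f(t_j)$ on each transition interval, the widened last bump removing a spurious descent at $b$; since every nodal reference point lies within $\delta$ of the current argument, $\|g-\tilde f\|_{L^\infty([a,b])}\le\varepsilon/2$. Replacing each $\rho$ by the corresponding $\phi$ gives $\hat g:=\sum_{j}\tilde f(t_{j-1})\,\phi_{t_{j-1},\beta_j,\tau,\kappa}\in\mathcal F_N$ (one common $\tau$ and $\kappa$), and the estimate above yields $\|\hat g-g\|_\infty\le \tfrac{6N\|f\|_\infty}{e\kappa}\le\varepsilon/2$ once $\kappa\ge 12N\|f\|_\infty/(e\varepsilon)$; hence $\sup_{x\in K}|\hat g(x)-f(x)|\le\varepsilon$, which is the claim.

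The step I expect to be the main obstacle is the passage to a continuous skeleton: because $\phi^+$ is smooth it cannot converge uniformly to a Heaviside jump, so a literal piecewise-constant approximant is unavailable and the unavoidable ramps must be folded into the error estimate --- the saving observation being that on every transition interval $g$ merely interpolates between two nodal values $\tilde f(t_{j-1}),\tilde f(t_j)$ sampled $O(\delta)$ away, so uniform continuity keeps control there too, together with the small boundary correction at $b$. By comparison the SiLU analysis is routine, its only subtlety being that the SiLU--ReLU comparison must hold globally in $x$, which the identity $\sup_{t\ge0}te^{-\kappa t}=1/(e\kappa)$ delivers.
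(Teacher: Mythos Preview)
Your argument is correct and, in fact, more careful than the paper's own. Both proofs share the same overall skeleton --- partition the interval, sample $f$ at the nodes, build a finite sum of localized bumps $\phi_{\alpha_i,\beta_i,\tau,\kappa}$ --- but they differ in the intermediate approximant. The paper first approximates $f$ by a genuine \emph{step function} $\psi=\sum_j d_j\,\mathbf{1}_{I_j}$ (Propositions~\ref{pro1} and~\ref{pro2}), and then invokes the claim that $\phi_{a,b,\tau,\kappa}\to\mathbf{1}_{[a,b]}$ \emph{uniformly} on $\mathbb{R}$. You instead identify the $\kappa\to\infty$ limit of $\phi^+_{\alpha,\kappa,\tau}$ as the continuous ramp $r_{\alpha,\tau}$, assemble a \emph{continuous} trapezoidal partition of unity $\{\rho_{t_{j-1},\beta_j,\tau}\}$, and approximate $f$ by the resulting piecewise-linear function $g$, for which uniform convergence in $\kappa$ genuinely holds with the explicit rate $6/(e\kappa)$.

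What this buys you is precisely the point you flagged as the main obstacle: since every $\phi_{a,b,\tau,\kappa}$ is continuous, it cannot lie within any $\delta<\tfrac12$ of $\mathbf{1}_{[a,b]}$ in sup-norm (near the endpoints the gap is always $\ge\tfrac12$), so the uniform statement in Proposition~\ref{pro1} is false as written, and with it Proposition~\ref{pro2}. Your route through a continuous skeleton repairs this gap cleanly while keeping the same architecture and the same final complexity. One cosmetic remark: affine interpolation across the complementary intervals of $K$ preserves $\|\cdot\|_\infty$ but not literally the modulus of continuity; since you only ever use that the extension $\tilde f$ is uniformly continuous on the compact $[a,b]$, this does not affect the argument.
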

	In order to prove this theorem, let us take a closer look at these two propositions.
	\begin{pro}
		\label{pro1}
		Let \( a < b \). For every \( \delta > 0 \), there exists \( \tau > 0 \), \( \kappa_0 > 0 \) such that for all \( \tau \in (0,\tau_0) \), \( \kappa \ge \kappa_0 \),
		\[
		\sup_{x \in \mathbb{R}} \left| \phi_{a,b,\tau,\kappa}(x)- \mathbf{1}_{[a,b]}(x) \right| < \delta.
		\]
		$\phi_{a,b,\tau,\kappa}(x)$ is a SiLU-activated neural network, with $2$ hidden layers, width $2$ with total number of parameters \( 11 \).
	\end{pro}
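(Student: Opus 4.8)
The plan is to reduce the two-sided profile to a single one-sided step and to analyze that step through one rescaled scalar profile. Because $\phi_{a,b,\tau,\kappa} = \phi^+_{a,\tau,\kappa} - \phi^+_{b,\tau,\kappa}$ while $\mathbf{1}_{[a,b]} = \mathbf{1}_{[a,\infty)} - \mathbf{1}_{(b,\infty)}$, the triangle inequality reduces the claim to showing that each one-sided step $\phi^+_{\alpha,\tau,\kappa}$ is uniformly close to the Heaviside jump at $\alpha$. Setting $t=\kappa(\alpha-x)/\tau$ and $w=1/\kappa$, and using $\kappa w = 1$, I would first record the compact form
\[
\phi^+_{\alpha,\tau,\kappa}(x)=\tfrac1\kappa\,\text{SiLU}\bigl(\kappa-2\,\text{SiLU}(t)\bigr),
\]
so that the whole problem is the analysis of the single scalar profile $F_\kappa(t)=\tfrac1\kappa\text{SiLU}(\kappa-2\text{SiLU}(t))$ precomposed with the affine map $x\mapsto t$ (note $t>0$ iff $x<\alpha$). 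I would also read off the architecture immediately: the factored form is a $1\to 2\to 2\to 1$ SiLU network whose second hidden layer is diagonal, giving two hidden layers of width $2$ and $4+4+3=11$ weights and biases, as claimed.

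Second, I would install two elementary estimates for the activation: $0\le u-\text{SiLU}(u)\le u\,e^{-u}$ for $u\ge 0$, the bound $|\text{SiLU}(u)|\le |u|\,e^{u}$ for $u\le 0$, and the global lower bound $\text{SiLU}\ge -c_0$ with $c_0=\sup_u(-\text{SiLU}(u))<1$. With these I would split $\mathbb{R}$ into three ranges. On the right bulk $x\ge\alpha$ (so $t\le 0$, $\text{SiLU}(t)\in[-c_0,0]$) the outer argument obeys $\kappa-2\text{SiLU}(t)\ge\kappa$, and combining $u-\text{SiLU}(u)\le u e^{-u}$ with $|\text{SiLU}(t)|\le c_0$ gives $|F_\kappa(t)-1|\le 2c_0/\kappa+(\text{exponentially small in }\kappa)$, uniformly. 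On the far-left bulk, where $\alpha-x\ge \tau/2+\eta$, the approximation $\text{SiLU}(t)\approx t$ makes the outer argument $\le -2\kappa\eta/\tau$, whence $|F_\kappa(t)|$ is exponentially small, uniformly. Thus both $\phi^+_a$ and $\phi^+_b$ match their Heaviside values to within $O(1/\kappa)$ plus exponentially small terms on the complement of two short windows; choosing $\tau$ small enough that $[a-\tau/2,a]$ and $[b-\tau/2,b]$ are disjoint, then $\kappa\ge\kappa_0(\delta)$, would deliver the bound $<\delta$ everywhere \emph{outside} those windows.

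The hard part — and the step I expect to be the genuine obstacle — is precisely these two transition windows. A local analysis shows that on $[\alpha-\tau/2,\alpha]$ one has $\phi^+_{\alpha,\tau,\kappa}(x)\approx 1-\tfrac{2(\alpha-x)}{\tau}$, i.e. a continuous ramp climbing from $0$ at $x=\alpha-\tau/2$ to $1$ at $x=\alpha$, so the bump $\phi_{a,b,\tau,\kappa}$ rises through the entire window $[a-\tau/2,a]$, which lies to the \emph{left} of $a$, exactly where $\mathbf{1}_{[a,b]}$ vanishes (and symmetrically at $b$). Since $\phi_{a,b,\tau,\kappa}$ is continuous, its one-sided limits at the jump coincide while the indicator has a unit jump, so $\sup_x|\phi_{a,b,\tau,\kappa}-\mathbf{1}_{[a,b]}|$ stays bounded below by $\tfrac12$ for every $\kappa,\tau$; in fact, because the ramp sits on $\{x<a\}$ while $\phi_{a,b,\tau,\kappa}\approx 1$ at the right end $x=a^-$ of that window, the supremum is close to $1$. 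This is the crux: the regime analysis controls everything off an $O(\tau)$-neighbourhood of $\{a,b\}$, but the uniform supremum over all of $\mathbb{R}$ necessarily includes that neighbourhood. I would therefore expect any clean route to the stated inequality to hinge on how the transition windows are read — for instance proving $<\delta$ on $\mathbb{R}\setminus\big((a-\tau/2,a)\cup(b-\tau/2,b)\big)$ together with a uniform $O(1)$ envelope on the windows, or measuring the defect in $L^1$ / pointwise a.e. rather than in $\|\cdot\|_\infty$. Settling the treatment of the transition windows is the decisive step; the parameter count and the bulk estimates are then routine.
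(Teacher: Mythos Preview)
Your regime analysis and the paper's proof follow the same skeleton: both reduce $\phi_{a,b,\tau,\kappa}$ to the one-sided pieces $\phi^+_{\alpha,\tau,\kappa}$, both pass through the observation that $\kappa^{-1}\mathrm{SiLU}(\kappa u)\to(u)_+$ (the paper even derives the same $O(1/\kappa)$ bound you get on the bulk), and both interpret the outer SiLU as clipping to a Heaviside. Your parameter count is also correct and matches the claimed $11$.

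The crucial point is that you have not merely failed to close the argument on the transition windows --- you have correctly diagnosed that the stated $\sup_{x\in\mathbb{R}}$ bound \emph{cannot} hold. Since $\phi_{a,b,\tau,\kappa}$ is continuous while $\mathbf{1}_{[a,b]}$ has unit jumps at $a$ and $b$, for any parameters one has $\sup_{x}|\phi_{a,b,\tau,\kappa}(x)-\mathbf{1}_{[a,b]}(x)|\ge \tfrac12$; your ramp analysis on $[\alpha-\tau/2,\alpha]$ shows the supremum is in fact close to $1$. The paper's own proof does not escape this: it explicitly downgrades to ``pointwise approximation of the Heaviside step'' and concludes only the pointwise limit $\phi^+_{a,\tau,\kappa}-\phi^+_{b,\tau,\kappa}\to\mathbf{1}_{[a,b]}$ as $\kappa\to\infty,\ \tau\to 0^+$, which is strictly weaker than the uniform statement in the proposition. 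So your ``hard part'' is not a gap in your argument but a defect in the proposition as written; the natural repair is exactly what you suggest --- uniform control off an $O(\tau)$-neighbourhood of $\{a,b\}$ (or an $L^1$ estimate), which is also what the downstream uses in the paper actually need.
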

	\begin{proof}
		Define \(f_\kappa(x)=\kappa^{-1}\mathrm{SiLU}(\kappa x)=x\sigma(\kappa x)\). We show uniform convergence to \((x)_+=\max(0,x)\) on compacts.
		
		For \(x\ge0\): \(|f_\kappa(x)-x|=x(1-\sigma(\kappa x))\le \frac{y e^{-y}}{\kappa}\le\frac{1}{e\kappa}\) with \(y=\kappa x\).
		
		For \(x\le0\): \(|f_\kappa(x)|=|x|\sigma(\kappa x)\le\frac{y e^{-y}}{\kappa}\le\frac{1}{e\kappa}\) with \(y=-\kappa x\).
		
		Hence \(\sup_{|x|\le R}|f_\kappa(x)-(x)_+|\le (e\kappa)^{-1}\to0\) as \(\kappa\to\infty\).
		
		Setting \(f_\kappa(-x/\tau)\) gives a pointwise approximation of the Heaviside step:
		$$
		\phi_{a,\tau,\kappa}^+(x)=1-\frac{2}{\kappa}\mathrm{SiLU}\left(\frac{\kappa(a-x)}{\tau}\right)
		\xrightarrow[\kappa\to\infty, \tau\to 0^+] {} 
		\begin{cases}
			1,\ a \le x\\
			0,\ a>x
		\end{cases}$$ 
		
		i.e., \(\phi_{a,\tau,\kappa}^+\to\mathbf{1}_{[a,\infty)}\). Therefore
		\[
		\phi_{a,\tau,\kappa}^+(x)-\phi_{b,\tau,\kappa}^+(x)
		\xrightarrow[\kappa\to\infty, \tau\to 0^+]{}
		\mathbf{1}_{[a,b]}(x),
		\]
		which is exactly the claimed limit.
	\end{proof}
	\noindent The theoretical convergence shown above is confirmed numerically in Fig.~\ref{fig9}, which displays the approximation of \(\mathbf{1}_{[1,4]}(x)\) using \(\phi^+_{1,\tau,\kappa}(x) - \phi^+_{4,\tau,\kappa}(x)\).
	
	\begin{figure}[htb]
		\centering
		\includegraphics[width=\linewidth, height=6cm, keepaspectratio]{./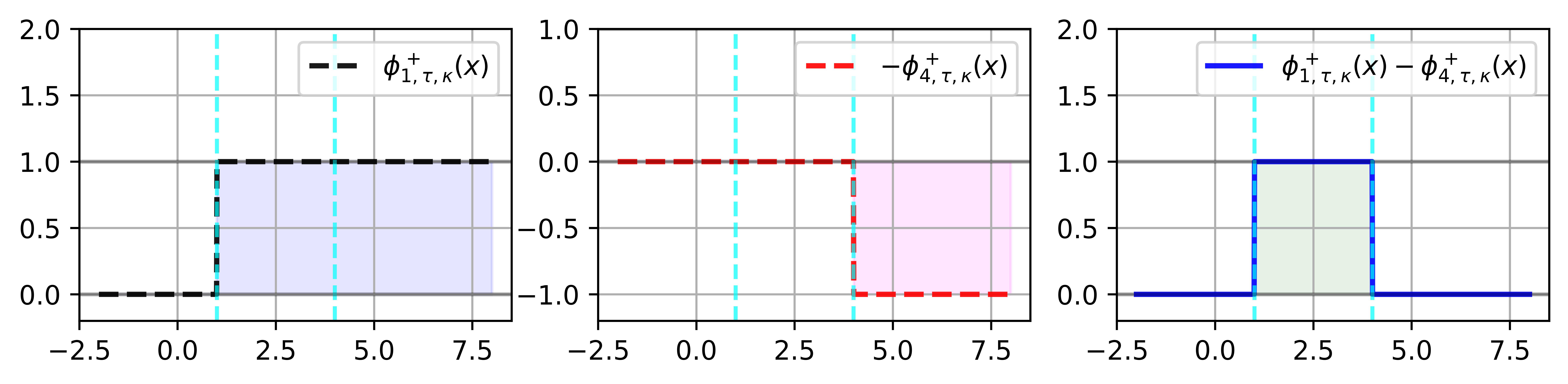}
		\caption{Approximation of the function \(\mathbf{1}_{[1, 4]}(x)\) by the function \(\phi^+_{1,\tau,\kappa}(x) -\phi^+_{4,\tau,\kappa}(x)\) (Proposition~\ref{pro1}).}
		\label{fig9}
	\end{figure}
	We say that a function \( \psi : K \to \mathbb{R} \) is a non-identically zero step function if there exists a finite partition of \( K \) into pairwise disjoint intervals \( I_1, \dots, I_m \) with non-empty interior, and real numbers \( d_1, \dots, d_m \) not all zero, such that:
	\[
	\psi(x) = \sum_{j=1}^m d_j \mathbf{1}_{I_j}(x).
	\]
	
	\begin{pro}[Approximation of Step Functions]
		\label{pro2}
		Let \( \psi : K \to \mathbb{R} \) be a non-identically zero step function:
		\[
		\psi = \sum_{j=1}^m d_j \mathbf{1}_{I_j},
		\]
		where the \( I_j \) are pairwise disjoint intervals with non-empty interior, \( d_j \in \mathbb{R} \).
		Then, for every \( \eta > 0 \), there exists \( N \in \mathbb{N}^* \), choices of \( \alpha_i, \beta_i, c_i, \tau, \kappa \) such that
		\[
		\left\| \psi - \sum_{i=1}^N c_i \phi_{\alpha_i,\beta_i,\tau,\kappa} \right\|_\infty < \eta.
		\]
	\end{pro}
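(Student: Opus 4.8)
The plan is to exploit the linear structure of $\mathcal{F}_N$ and reduce everything to the single-interval case already settled in Proposition~\ref{pro1}. Write $\psi=\sum_{j=1}^m d_j\mathbf{1}_{I_j}$ with $I_j$ having endpoints $a_j\le b_j$. I would take $N=m$, set $c_j=d_j$, $\alpha_j=a_j$, $\beta_j=b_j$, and consider $\Phi:=\sum_{j=1}^m d_j\,\phi_{a_j,b_j,\tau,\kappa}\in\mathcal{F}_m$ for parameters $\tau,\kappa$ to be chosen. By the triangle inequality,
\[
\Bigl\|\psi-\Phi\Bigr\|_\infty\le\sum_{j=1}^m |d_j|\;\bigl\|\mathbf{1}_{I_j}-\phi_{a_j,b_j,\tau,\kappa}\bigr\|_\infty ,
\]
so it suffices to drive each summand below $\eta/\bigl(m\max_j|d_j|\bigr)$, which is exactly what Proposition~\ref{pro1} delivers on each interval.

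Next I would make the choice of $(\tau,\kappa)$ uniform in $j$. Put $\delta:=\eta/\bigl(1+\sum_{j=1}^m|d_j|\bigr)>0$. Proposition~\ref{pro1}, applied to the pair $(a_j,b_j)$ with tolerance $\delta$, yields thresholds $\tau_0^{(j)}>0$ and $\kappa_0^{(j)}>0$ such that $\sup_x|\phi_{a_j,b_j,\tau,\kappa}-\mathbf{1}_{[a_j,b_j]}|<\delta$ for all $\tau\in(0,\tau_0^{(j)})$ and $\kappa\ge\kappa_0^{(j)}$. Since there are only finitely many intervals, I set $\tau:=\tfrac12\min_{1\le j\le m}\tau_0^{(j)}$ and $\kappa:=\max_{1\le j\le m}\kappa_0^{(j)}$; with this single pair every interval is simultaneously approximated to within $\delta$, and the displayed bound gives $\|\psi-\Phi\|_\infty<\eta$. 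The realizing network is the parallel composition of the $m$ copies of the two-hidden-layer, width-two block of Proposition~\ref{pro1} sharing $\tau,\kappa$, followed by the linear readout $\sum_j d_j(\cdot)$, so depth stays $2$ and size is $\mathcal{O}(m)$.

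The step needing the most care — and the only real obstacle — is the treatment of the interval endpoints. The blocks $\phi_{a_j,b_j,\tau,\kappa}$ target the closed indicators $\mathbf{1}_{[a_j,b_j]}$, whereas in a partition of $K$ the pieces $I_j$ are typically half-open and adjacent ones share an endpoint, so $\mathbf{1}_{I_j}$ and $\mathbf{1}_{[a_j,b_j]}$ disagree at up to two points, and no continuous approximant can beat an $\mathcal{O}(|d_j|)$ discrepancy in a shrinking neighbourhood of a genuine jump of $\psi$. I would absorb this by contracting each target interval inward to $[a_j+\nu,\,b_j-\nu]$, with $\nu>0$ small enough (possible since each $I_j$ has nonempty interior) that these closed intervals remain pairwise disjoint and contained in $K$; Proposition~\ref{pro1} is then applied to $(a_j+\nu,b_j-\nu)$, and the leftover error lives only on $\bigcup_j\bigl([a_j,a_j+\nu)\cup(b_j-\nu,b_j]\bigr)$, a set whose size $\le 2m\nu$ is at our disposal. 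This is precisely the regime in which Proposition~\ref{pro1} is used downstream in Theorem~\ref{theo6}, where the continuous target varies by less than the tolerance across each breakpoint, so these boundary layers are harmless; apart from this bookkeeping, the argument is a plain linear superposition of Proposition~\ref{pro1}.
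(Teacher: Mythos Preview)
Your argument is essentially the paper's: set $N=m$, $c_i=d_i$, take $(\alpha_i,\beta_i)$ to be the endpoints of $I_i$, invoke Proposition~\ref{pro1} termwise with $\delta$ proportional to $\eta$, and conclude by the triangle inequality; your explicit uniform choice of $(\tau,\kappa)$ via $\min_j\tau_0^{(j)}$ and $\max_j\kappa_0^{(j)}$ is a detail the paper leaves implicit. Your final paragraph on endpoint bookkeeping goes beyond the paper, which simply writes $I_i=[\alpha_i,\beta_i]$ and does not discuss half-open pieces or shared boundary points --- but the concern you raise is legitimate given that Proposition~\ref{pro1} targets closed indicators and a continuous $\phi$ cannot uniformly approximate a genuine jump.
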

	
	\begin{proof}
		We take \( N = m \), \( \alpha_i, \beta_i \) such that \( I_i = [\alpha_i, \beta_i] \).
		By Proposition~\ref{pro1}, there exist $\tau_0$
		and $\kappa_0$ two strictly positive real numbers such that for all $\tau_0> \tau > 0$ and $\kappa > \kappa_0$, such that
		\[
		\| \phi_{\alpha_i,\beta_i,\tau,\kappa} - \mathbf{1}_{I_i} \|_\infty < \delta \quad \forall i.
		\]
		We set \( c_i = d_i \). Then, for \( \delta = \frac{\eta}{m \cdot \max_j \{|d_j|, |c_j|\}}  \)
		\begin{align*}
			\Bigg\|  \psi - & \sum_{i=1}^m c_i \phi_{\alpha_i,\beta_i,\tau,\kappa} \Bigg\|_\infty\\
			&\le \sum_{i=1}^m  \max_j \{|d_j|, |c_j|\} \, \| \mathbf{1}_{I_i} - \phi_{\alpha_i,\beta_i,\tau,\kappa} \|_\infty\\
			&
			\le m \cdot \max_j \{|d_j|, |c_j|\} \cdot\delta\\
			& \le \eta.
		\end{align*}
		
	\end{proof}
	
	\begin{proof}[Proof of theorem~\ref{theo6}]
		
		Let \( f \in C(K) \), \( \varepsilon > 0 \). Since \( K \) is compact and \( f \) is continuous, \( f \) is uniformly continuous on \( K \).
		
		There exists \( \eta > 0 \) such that:
		\[
		\forall x,y \in K,\ |x - y| < \eta \Rightarrow |f(x) - f(y)| < \frac{\varepsilon}{2}
		\]
		
		We partition the compact \( K \) into $m$ intervals \( I_1, \dots, I_m \) of diameter less than \(\eta \). For each \( I_j \), we choose \( x_j \in I_j \) and define:
		\[
		\psi(x) = f(x_j) \quad \text{for } x \in I_j
		\]
		
		Then for all \( x \in K \), if \( x \in I_j \), we have \( |x - x_j| < \eta \) so:
		\[
		|f(x) - \psi(x)| = |f(x) - f(x_j)| < \frac{\varepsilon}{2}
		\]
		If we set \(\delta = \frac{\varepsilon}{4m \cdot \max_j \{|d_j|, |c_j|\}} \), we deduce the approximation error of the continuous function $f$
		\[
		\|f - \sum_{i=1}^N c_i \, \phi_{\alpha_i, \beta_i,\kappa, \tau}\|_\infty \leq \|f - \psi\|_\infty + \|\psi - \sum_{i=1}^N c_i \, \phi_{\alpha_i, \beta_i,\kappa, \tau}\|_\infty \leq \frac{\varepsilon}{2}+\frac{\varepsilon}{2}= \varepsilon
		\]
		Therefore \( \mathcal{F} \) is dense in \( C(K) \).
	\end{proof}
	
	\begin{coro}[Fast Approximation of Continuous Functions by SiLU Neural Networks]
		\label{coro2}
		For \( f \in C([a,b]) \) with modulus of continuity \( \omega_f \), and for \( \varepsilon > 0 \), there exists a SiLU-activated neural network, with $2$ hidden layers and width $2N$
		\[
		f_\varepsilon(x) = \sum_{i=1}^N c_i \phi_{\alpha_i,\beta_i,\tau,\kappa}(x)
		\]
		such that:
		\[
		N = \mathcal{O}\left( \frac{b-a}{\omega_f^{-1}(\varepsilon)} \right), \quad \|f - f_\varepsilon\|_\infty < \varepsilon
		\]
		
		The total number of parameters is \( 10N + 1 \).
	\end{coro}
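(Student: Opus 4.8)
The plan is to reuse the step-function machinery already developed for Theorem~\ref{theo6}, making quantitative the number of pieces needed in terms of the modulus of continuity $\omega_f$, and then read off the architecture from Propositions~\ref{pro1} and~\ref{pro2}.

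First I would fix the mesh. Let $\delta>0$ satisfy $\omega_f(\delta)\le\varepsilon/2$; equivalently take $\delta=\omega_f^{-1}(\varepsilon/2)$. Partition $[a,b]$ into $N=\big\lceil (b-a)/\delta\big\rceil$ pairwise disjoint intervals $I_1,\dots,I_N$, each of length at most $\delta$, choose $x_i\in I_i$, and set $\psi=\sum_{i=1}^N f(x_i)\mathbf{1}_{I_i}$. For $x\in I_i$ we have $|f(x)-\psi(x)|=|f(x)-f(x_i)|\le\omega_f(\delta)\le\varepsilon/2$, hence $\|f-\psi\|_\infty\le\varepsilon/2$. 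Since $\omega_f^{-1}$ is nondecreasing and the argument $\varepsilon/2$ is a fixed fraction of $\varepsilon$, the constant is absorbed and $N=\mathcal{O}\big((b-a)/\omega_f^{-1}(\varepsilon)\big)$.

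Next I would invoke Proposition~\ref{pro2} with $\eta=\varepsilon/2$ applied to the step function $\psi$, which has exactly $N$ pieces: it produces endpoints $\alpha_i,\beta_i$ with $I_i=[\alpha_i,\beta_i]$, coefficients $c_i=f(x_i)$, and one pair $(\tau,\kappa)$ — $\tau$ small, $\kappa$ large, valid simultaneously for all $i$ since there are only finitely many intervals — such that $\big\|\psi-\sum_{i=1}^N c_i\phi_{\alpha_i,\beta_i,\tau,\kappa}\big\|_\infty<\varepsilon/2$. Putting $f_\varepsilon=\sum_{i=1}^N c_i\phi_{\alpha_i,\beta_i,\tau,\kappa}\in\mathcal{F}_N$ and applying the triangle inequality gives $\|f-f_\varepsilon\|_\infty<\varepsilon$. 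For the architecture, by Proposition~\ref{pro1} each block $\phi_{\alpha_i,\beta_i,\tau,\kappa}$ is a SiLU network with $2$ hidden layers of width $2$; stacking the $N$ blocks in parallel (all sharing the common $\tau,\kappa$) and appending the output linear combination with weights $c_i$ yields a network with $2$ hidden layers of total width $2N$, and the parameter count $10N+1$ follows from the per-block tally of Proposition~\ref{pro1} once one accounts for $\tau$ and $\kappa$ being shared across the $N$ blocks.

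The only point requiring real care is the \emph{uniform} choice of $(\tau,\kappa)$: Proposition~\ref{pro1} supplies, for each interval, its own thresholds $\tau_0^{(i)},\kappa_0^{(i)}$, so one takes $\tau<\min_i\tau_0^{(i)}$ and $\kappa>\max_i\kappa_0^{(i)}$ and must verify that the transition width of each $\phi^+$ (of order $\tau$) is negligible compared with the interval lengths (of order $\delta$), so that the $N$ block errors sum to less than $\varepsilon/2$. Everything else is routine; in particular $\tau$ and $\kappa$ are ordinary real-valued parameters, so letting $\tau\to0$ and $\kappa\to\infty$ as $\varepsilon\to0$ leaves the count $10N+1$ untouched, and the complexity is genuinely governed by $\omega_f^{-1}(\varepsilon)$.
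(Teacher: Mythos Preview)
Your proposal is correct and is exactly the argument the paper intends: Corollary~\ref{coro2} is stated without a separate proof, the implicit derivation being to rerun the proof of Theorem~\ref{theo6} with the interval count made explicit as $N=\lceil(b-a)/\omega_f^{-1}(\varepsilon/2)\rceil$ and then read off the architecture from Propositions~\ref{pro1} and~\ref{pro2}. Your attention to the uniform choice of $(\tau,\kappa)$ across the $N$ blocks and to the parameter bookkeeping matches, and in fact slightly exceeds, the level of detail the paper provides.
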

	
	The second approach leverages the classical Weierstrass approximation theorem combined with our efficient polynomial implementation (Theorem~\ref{theo4}).
	
	\begin{theo}[Polynomial Approximation of Continuous Functions by SiLU Neural Networks]
		\label{theo7}
		Let \( f \in C([-B,B]) \). For every \( \varepsilon \in (0,1) \), there exists a SiLU neural network \( \mathcal{N}_{M,k} \) such that:
		\[
		\sup_{|x| \leq B} |f(x) - \mathcal{N}_{M,k}(x)| \leq \varepsilon
		\]
		$\mathcal{N}_{M,k}$ is a NN with SiLU activation like in Theorem~\ref{theo4}.
	\end{theo}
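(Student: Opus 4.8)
The plan is to combine the classical Weierstrass approximation theorem with the single-hidden-layer monomial construction of Theorem~\ref{theo4}. First I would fix $f\in C([-B,B])$ and $\varepsilon\in(0,1)$ and apply Weierstrass to get a polynomial $P(x)=\sum_{m=0}^{M}a_m x^m$ with $\sup_{|x|\le B}|f(x)-P(x)|\le\varepsilon/2$. The degree $M$ and the coefficients $a_m$ depend on $f$ and $\varepsilon$, but---and this is the point to keep in mind---they are frozen before any network hyperparameter is selected.

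Next I would realize $\mathcal{N}_{M,k}$ as a superposition of the networks $Q^m_k$ from Theorem~\ref{theo4}. I would pick one shift $a\neq 0$ for which $K_m(a)=a\sigma^{(m)}(a)+m\sigma^{(m-1)}(a)=\mathrm{SiLU}^{(m)}(a)\neq0$ for every $m\in\{1,\dots,M\}$ simultaneously; this is possible because each $\mathrm{SiLU}^{(m)}$ is real-analytic and not identically zero, hence has only isolated real zeros, so all but finitely many $a$ in a bounded interval avoid the finite union of these zero sets. (This is exactly the degeneracy flagged in the Remark after Theorem~\ref{theo4}, where $a=0$ forces $K_m=0$ for odd $m\ge3$.) I would also fix one common scale $\beta<\pi/(2MB)$, so that the geometric-series estimate in the proof of Theorem~\ref{theo4} is valid for every $m\le M$ with a single $\omega=\beta^{-1}>1$. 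Theorem~\ref{theo4} then provides networks $Q^m_k$, each a single hidden layer with $m+1$ neurons, satisfying $\sup_{|x|\le B}|Q^m_k(x)-x^m|\le C_{1,m}\,\omega^{-k}$, and I set
\[
\mathcal{N}_{M,k}(x)=a_0+\sum_{m=1}^{M}a_m\,Q^m_k(x).
\]
Since every $Q^m_k$ acts on the same scalar input $x$ through one SiLU layer and the output is an affine read-out, $\mathcal{N}_{M,k}$ is again a single-hidden-layer SiLU network---of the form in Theorem~\ref{theo4}---with $\sum_{m=1}^{M}(m+1)=\mathcal{O}(M^2)$ neurons and depth $\mathcal{O}(1)$.

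The last step is a triangle inequality:
\[
\sup_{|x|\le B}|f(x)-\mathcal{N}_{M,k}(x)|\le\frac{\varepsilon}{2}+\Bigl(\max_m|a_m|\Bigr)\Bigl(\sum_{m=1}^{M}C_{1,m}\Bigr)\omega^{-k},
\]
so choosing
\[
k=\left\lceil\frac{\ln\bigl(2(\max_m|a_m|)\sum_{m=1}^{M}C_{1,m}\bigr)-\ln\varepsilon}{\ln\omega}\right\rceil=\mathcal{O}\bigl(\ln 1/\varepsilon\bigr)
\]
pushes the right-hand side below $\varepsilon$ and finishes the proof. The main difficulty here is bookkeeping rather than anything structural: the per-monomial constants $C_{1,m}=2C_1(mB/\pi)^{m+1}$ grow rapidly with $m$ and the Weierstrass coefficients $a_m$ may be large, so the write-up must insist that $M$ and $\{a_m\}$ are fixed in the first step before $k$ is chosen; once this order is respected, these quantities only inflate the constant hidden in $k=\mathcal{O}(\ln 1/\varepsilon)$ and leave the depth at its constant value. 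A secondary point to check with care is the simultaneous non-vanishing of all the $K_m(a)$ and the uniform validity of the geometric bound for the common $\beta$, both settled by the analyticity argument and the choice $\beta<\pi/(2MB)$ above.
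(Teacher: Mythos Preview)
Your proposal is correct and follows essentially the same route as the paper: Weierstrass to reduce to a polynomial, then Theorem~\ref{theo4} to approximate each monomial, then a triangle inequality and a choice of $k$. You are in fact more careful than the paper about two technical points the paper glosses over---selecting a single shift $a$ with $K_m(a)\neq0$ for all $m\le M$, and a single scale $\beta$ valid for all $m$---and your $\omega^{-k}$ matches Theorem~\ref{theo4} exactly, whereas the paper's displayed bound $\omega_M^{-2k}$ is a minor slip.
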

	
	To prove this theorem, we recall the following lemma:
	
	\begin{lem}
		\label{lem1}
		Let \( f \) be a continuous function from \( [a, b] \) to \( \mathbb{R} \).
		For every \( \delta > 0 \), there exists a polynomial function \( P_f \) with real coefficients such that for all \( x \) in \( [a, b] \), \( |f(x) - P_f(x)| \leq \delta \).
	\end{lem}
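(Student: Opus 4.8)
The plan is to prove this classical Weierstrass-type statement by an explicit construction with Bernstein polynomials, in keeping with the constructive spirit of the paper. First I would reduce to the unit interval: the affine map $t \mapsto a + (b-a)t$ is a bijection $[0,1] \to [a,b]$, composition with it preserves both polynomiality and the uniform norm, so it suffices to approximate $\tilde f(t) = f(a+(b-a)t) \in C([0,1])$. For $g \in C([0,1])$, define the $n$-th Bernstein polynomial
\[
B_n(g)(t) = \sum_{j=0}^{n} g\!\left(\tfrac{j}{n}\right)\binom{n}{j} t^j (1-t)^{n-j},
\]
and the goal becomes showing $\|g - B_n(g)\|_\infty \to 0$; then any $n$ with $\|g - B_n(g)\|_\infty \le \delta$ gives the required polynomial $P_f$ after undoing the change of variables.

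The key steps, in order: (i) from the binomial theorem derive $\sum_{j} \binom{n}{j} t^j(1-t)^{n-j} = 1$ and, interpreting the weights as the law of a $\mathrm{Binomial}(n,t)$ variable, the variance identity $\sum_j (\tfrac{j}{n} - t)^2 \binom{n}{j} t^j (1-t)^{n-j} = \tfrac{t(1-t)}{n} \le \tfrac{1}{4n}$; (ii) invoke uniform continuity of $g$ on the compact $[0,1]$ to choose $\eta > 0$ with $|s - t| < \eta \Rightarrow |g(s) - g(t)| < \delta/2$, and set $M = \|g\|_\infty$; (iii) for each fixed $t$, write $g(t) - B_n(g)(t) = \sum_j \bigl(g(t) - g(\tfrac{j}{n})\bigr)\binom{n}{j} t^j(1-t)^{n-j}$ and split the sum according to whether $|\tfrac{j}{n} - t| < \eta$ or not, bounding the first part by $\delta/2$ via uniform continuity and the second by $2M$ times the tail probability $\sum_{|j/n - t| \ge \eta}\binom{n}{j} t^j(1-t)^{n-j} \le \tfrac{1}{4 n \eta^2}$ (Chebyshev's inequality together with the variance bound from (i)); (iv) conclude $|g(t) - B_n(g)(t)| \le \delta/2 + \tfrac{M}{2 n \eta^2}$ for all $t \in [0,1]$, which is $\le \delta$ once $n \ge M/(\eta^2 \delta)$.

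The main difficulty is purely one of bookkeeping rather than ideas: ensuring the tail estimate in step (iii) is uniform in $t \in [0,1]$, which is exactly why the $t$-independent variance bound $t(1-t)/n \le 1/(4n)$ is the crucial ingredient. If one prefers a non-constructive route, the statement also follows immediately from the Stone–Weierstrass theorem, since the polynomials form a point-separating subalgebra of $C([a,b])$ containing the constants; but the Bernstein construction has the advantage of producing an explicit $P_f$, which then feeds directly into the SiLU implementation used in Theorem~\ref{theo7}.
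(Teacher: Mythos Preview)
Your proposal is correct; the Bernstein-polynomial argument you outline is the standard constructive proof of the Weierstrass theorem, and every step you list is valid, including the uniform variance bound $t(1-t)/n\le 1/(4n)$ that makes the Chebyshev tail estimate uniform in $t$.

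The paper, however, does not prove this lemma at all: it is merely \emph{recalled} as a classical fact and then immediately used in the proof of Theorem~\ref{theo7}. So there is no approach to compare against. Your explicit construction is strictly more than what the paper provides, and, as you note, it has the virtue of yielding a concrete $P_f$ (with an explicit bound on the degree $n$ in terms of $\delta$ and the modulus of continuity of $f$) that could in principle be fed into the SiLU implementation of Theorem~\ref{theo5} or~\ref{theo4}.
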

	\begin{proof}
		
		Let \( f \) be a \( C^0 \) function on \( [-B, B] \). Lemma \ref{lem1} guarantees the existence of a polynomial function \( P_f \) with real coefficients such that for all \( x \) in \( [-B, B] \), \( |f(x) - P_f(x)| \leq \delta= \frac{\varepsilon}{2} \leq \frac{1}{2} \). 
		According to Theorem~\ref{theo4}, there exists a SiLU neural network \( \mathcal{N}_{M,k} \) such that 
		\begin{align*}
			\sup_{|x| \leq B}|P_f(x)-\mathcal{N}_{M,k}(x)| \leq C_M \cdot \max_m |a_m| \cdot \omega_4^{-2k}
		\end{align*}
		
		\begin{align*}
			|f(x)-\mathcal{N}_{M,k}(x)|
			\leq |f(x)-P_f(x)|+|P_f(x)-\mathcal{N}_{M,k}(x)|
		\end{align*}
		\begin{align*}
			\sup_{|x| \leq B} |f(x)-\mathcal{N}_{M,k}(x)|	\leq \delta + C_M \cdot \max_m |a_m| \cdot \omega_M^{-2k}
		\end{align*}
		If we set  
		\[
		k = \left\lceil -\frac{\ln \frac{\varepsilon}{2C_M \cdot \max_m |a_m|}}{2 \ln \omega_M} \right\rceil,
		\]
		we obtain 
		\[
		\sup_{|x| \leq B} |f(x) - \mathcal{N}_{M,k}(x)| \leq \varepsilon.
		\]
		
	\end{proof}
	
	The following result shows that these networks can approximate any Sobolev function with a complexity that depends on the required accuracy, and establishes the optimal convergence rates depending on the regularity of the target function and the dimension of the space. Let \( B > 0 \), \( d, n \in \mathbb{N}^* \), and consider the open ball of Sobolev space:
	Functions belonging to the Sobolev class are defined as follows.
	
	Let \( \Omega = [-B, B]^d \), with $B\leq 1$, be a bounded domain. In this section, vectors are denoted in bold. For a multi-index \( \balpha = (\alpha_1, \dots, \alpha_d) \in \mathbb{N}^d \), denote \( |\balpha| = \alpha_1 + \cdots + \alpha_d \) and $D^{\balpha} f$
	a weak derivative of $f.$
	The Sobolev space \( W^{n,\infty}(\Omega) \) is defined by
	\[
	W^{n,\infty}(\Omega) = \big\{ f \in L^\infty(\Omega) \; \big|\forall \balpha  : \ |\balpha|\le n,  D^{\balpha} f \in L^\infty(\Omega) \}.
	\]
	It is equipped with the norm
	\[
	\| f \|_{W^{n,\infty}(\Omega)} = \max_{|\balpha| \le n} \| D^{\balpha} f \|_{L^\infty(\Omega)},
	\]
	where
	\[
	\| g \|_{L^\infty(\Omega)} = \operatorname{ess\,sup}_{x \in \Omega} |g(x)|.
	\]
	We consider the unit ball of this space:
	\[
	\mathcal{F}_{n,d} = \big\{ f \in W^{n,\infty}([-B,B]^d) : \| f \|_{W^{n,\infty}} \le 1 \big\}.
	\]
	
	\begin{theo}
		\label{theo8}
		For any $f\in \mathcal{F}_{n,d}$ and \( \varepsilon \in (0,1) \), there exists a neural network \( \widetilde{f}_\varepsilon \) with SiLU activation function such that:
		\[
		\sup_{x \in [-B, B]^d} |f(x) - \widetilde{f}_\varepsilon(x)| \leq \varepsilon,
		\]
		with a complexity of:
		\[
		\text{size}(\widetilde{f}_\varepsilon) = \mathcal{O}\left( \varepsilon^{-d/n}  \right), \quad \text{depth}(\widetilde{f}_\varepsilon) = \mathcal{O}\left( 1 \right)=c(d,n,B),
		\]
		where the implicit constants depend on \( d \), \( n \), and \( B \).
	\end{theo}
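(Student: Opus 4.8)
The plan is to follow the classical Yarotsky-type scheme — localized Taylor approximation glued by a partition of unity — but to realize every constituent by a \emph{constant-depth} SiLU network, exploiting that SiLU multiplication (Theorem~\ref{theo2}), univariate powers (Theorem~\ref{theo4}) and polynomials (Theorem~\ref{theo5}) already have depth $\mathcal{O}(1)$. This is precisely what removes the logarithmic depth factor that ReLU multiplication otherwise forces into the analogous construction, while the number of cells in the mesh — and hence the neuron count — is unchanged, giving size $\mathcal{O}(\varepsilon^{-d/n})$.

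First I would fix the mesh. Set $N=\big\lceil (2^{d}C_0/\varepsilon)^{1/n}\big\rceil$ for a constant $C_0=C_0(n,d)$ fixed below, let $\psi:\mathbb{R}\to[0,1]$ be the standard triangular hat of half-width $B/N$, and define the tensor bumps $\phi_{\mathbf m}(\mathbf x)=\prod_{i=1}^{d}\psi\!\big(\tfrac{N}{B}(x_i-x_{\mathbf m,i})\big)$ centered at the $\mathcal{O}(N^{d})$ grid points $x_{\mathbf m}\in[-B,B]^{d}$. These satisfy $\sum_{\mathbf m}\phi_{\mathbf m}\equiv 1$ on $[-B,B]^{d}$, each $\phi_{\mathbf m}$ is supported on a cube of side $\le 2B/N$, and at most $2^{d}$ of them are nonzero at any point. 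For each $\mathbf m$ let $P_{\mathbf m}$ be the Taylor polynomial of $f$ of degree $n-1$ at $x_{\mathbf m}$; since $\|f\|_{W^{n,\infty}}\le 1$, the integral remainder gives $|f(\mathbf x)-P_{\mathbf m}(\mathbf x)|\le C_0 N^{-n}$ for $\mathbf x\in\operatorname{supp}\phi_{\mathbf m}$, and also $\|P_{\mathbf m}\|_{\infty}\le 2$ on that support. Writing $f=\sum_{\mathbf m}\phi_{\mathbf m}f$ exactly and putting $f_{\mathrm{approx}}=\sum_{\mathbf m}\phi_{\mathbf m}P_{\mathbf m}$, the partition-of-unity identity and the support count yield $\|f-f_{\mathrm{approx}}\|_\infty\le 2^{d}C_0N^{-n}\le\varepsilon/2$.

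Next I would replace every factor by a SiLU subnetwork. The hat $\psi$ is assembled from two shifted copies of the soft-ReLU $f_\kappa(t)=\kappa^{-1}\mathrm{SiLU}(\kappa t)$ from the proof of Proposition~\ref{pro1}, which converges uniformly to $(t)_+$ with error $\le(e\kappa)^{-1}$ and whose deviation from $0$ away from the kink decays like $\kappa^{-1}e^{-\kappa|t|}$; this produces a SiLU hat $\widetilde\psi_\kappa$ with $\|\widetilde\psi_\kappa-\psi\|_\infty\le c\kappa^{-1}$ and, crucially, $|\widetilde\psi_\kappa(t)|\le c\kappa^{-1}e^{-c\kappa/N}$ outside the width-$2B/N$ support of $\psi$. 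The tensor bump $\widetilde\phi_{\mathbf m}=\prod_i\widetilde\psi_\kappa(\cdots)$ is formed by a balanced binary tree of the multiplication networks $M_k$ of Theorem~\ref{theo2} (depth $\lceil\log_2 d\rceil\cdot\mathcal{O}(1)=\mathcal{O}(1)$, width $\mathcal{O}(d)$); the multivariate polynomial $P_{\mathbf m}$, which has $\binom{n-1+d}{d}=\mathcal{O}_{n,d}(1)$ monomials, is realized via the univariate power networks of Theorem~\ref{theo4} combined by multiplication networks in a constant-depth tree and one linear layer (depth $\mathcal{O}(n)=\mathcal{O}(1)$, size $\mathcal{O}(1)$); one further $M_k$ forms the product $\widetilde\phi_{\mathbf m}\cdot\widetilde P_{\mathbf m}$, and summing over $\mathbf m$ is a single output linear layer. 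All arguments fed to the multiplication networks lie in a fixed compact (hats in $[0,1+o(1)]$, polynomials bounded by $2$), so the cited theorems apply with their stated constant depth and size and with weights $\beta^{\pm k}$, $k=\mathcal{O}(\ln(1/\delta))$, for any internal target accuracy $\delta$.

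Finally I would aggregate the error. By the triangle inequality,
\[
\|f-\widetilde f_\varepsilon\|_\infty\le\|f-f_{\mathrm{approx}}\|_\infty+\sum_{\mathbf m}\big\|\phi_{\mathbf m}P_{\mathbf m}-\widetilde\phi_{\mathbf m}\widetilde P_{\mathbf m}\big\|_\infty,
\]
and each summand splits into (i) the polynomial-implementation error $\le\|\phi_{\mathbf m}\|_\infty\|P_{\mathbf m}-\widetilde P_{\mathbf m}\|_\infty$, (ii) the error of the final $M_k$, (iii) the hat-product error $\le 2\|\phi_{\mathbf m}-\widetilde\phi_{\mathbf m}\|_\infty$ which telescopes along the tree of $M_k$'s using boundedness of all partial products, and (iv) the tail contribution from the $\approx N^{d}$ cells where $\phi_{\mathbf m}(\mathbf x)=0$ but $\widetilde\phi_{\mathbf m}(\mathbf x)\neq 0$. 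Terms (i)--(iii) are made $\le\varepsilon/(C2^{d})$ by choosing the internal accuracies $\delta$ and the hat scale $\kappa$ large enough; term (iv) is $\le 2N^{d}c\kappa^{-1}e^{-c\kappa/N}\le\varepsilon/4$ as soon as $\kappa$ is polynomially large in $N$ and $1/\varepsilon$, i.e. $\ln\kappa=\mathcal{O}(\ln(1/\varepsilon))$, so $\kappa$ (like all internal weights) is of size $\varepsilon^{-\mathcal{O}(1)}$. Summing gives $\|f-\widetilde f_\varepsilon\|_\infty\le\varepsilon$; the depth is the maximum over the hat blocks, the $\log_2 d$-deep product tree, the polynomial block and the final $M_k$, all $\mathcal{O}(1)$ depending only on $d,n,B$; the size is $\mathcal{O}(1)$ per cell times $\mathcal{O}(N^{d})=\mathcal{O}(\varepsilon^{-d/n})$ cells plus one output unit. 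The main obstacle is exactly term (iv): unlike piecewise-linear ReLU bumps, SiLU-built bumps have exponentially small but nonzero tails \emph{everywhere}, so one must check that pushing $\kappa$ to a polynomially large value — affordable, since $\kappa$ enters only logarithmically through the affine weights of a fixed-size block and therefore inflates neither the depth nor the neuron count — makes the aggregate tail over all $\mathcal{O}(\varepsilon^{-d/n})$ cells negligible.
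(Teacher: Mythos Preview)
Your proposal is correct and follows the same overall scheme as the paper---localized Taylor polynomials glued by a partition of unity, with every constituent realized by constant-depth SiLU subnetworks for products and monomials. The implementation choices differ: you use overlapping triangular hats built from the soft-ReLU $f_\kappa$, whereas the paper uses \emph{disjoint} indicator functions $\mathbf{1}_{C_{j_i}}$ approximated via Proposition~\ref{pro1}; and you assemble $d$-fold and monomial products by balanced binary trees of $M_k$'s, whereas the paper chains them sequentially. Both variants yield the same depth $\mathcal{O}_{n,d}(1)$ and size $\mathcal{O}(\varepsilon^{-d/n})$. Your explicit treatment of the tail leakage (term~(iv)) is in fact a point the paper's own proof glosses over: there the sum over cells is silently restricted to those $\mathbf{j}$ with $\mathbf{x}\in\operatorname{supp}\psi_{\mathbf j}$ without arguing why the SiLU-approximated terms $\widetilde G_{\mathbf j,\boldsymbol\alpha}(\mathbf x)$ for the remaining $\mathcal{O}(N^d)$ cells are negligible, so your version is more complete on that count. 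One small imprecision in your write-up: the uniform bound $|\widetilde\psi_\kappa(t)|\le c\kappa^{-1}e^{-c\kappa/N}$ outside the support is not valid for $t$ arbitrarily close to the support boundary (the exponent should scale with the distance to the support); but the conclusion survives, since only $\mathcal{O}(3^d)$ adjacent cells lack the exponential damping and their aggregate $\mathcal{O}(\kappa^{-1})$ contribution is still absorbed by choosing $\kappa$ polynomially large in $1/\varepsilon$.
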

	\begin{proof}
		The proof proceeds in three key steps: a discretization of the space into small cubes where the function is locally approximated by its Taylor polynomial; an aggregation of these local approximations into a global; and finally, the efficient implementation of this construction using a SiLU neural network, whose complexity we analyze.
		
		Partition the domain \( [-B, B]^d \) into $M^d$ cubes of side length \( h = \frac{2B}{M} \), where \( \bc_\bj = (c_{j_1},c_{j_2}, \cdots, c_{j_d}) \) is the center of \( \bC_\bj \).
		For each cube $\bC_\bj$, define a function $\psi_\bj : \mathbb{R}^d \to [0,1]$ that is approximately $\mathbf{1}_{\bC_{\bj}}$. Define
		
		\[
		\psi_\bj(\bx)=\prod_{i=1}^d \mathbf{1}_{\bC_{j_i}}(x_i),
		\]
		
		It follows that 
		$$
		\sum_{\bj}\psi_\bj(\bx)\equiv1,\qquad \bx\in[-B,B]^d.
		$$
		
		For each cube \( C_\bj \) and for any \( f \in \mathcal{F}_{n,d} \), we consider the Taylor polynomial of order \( n-1 \) at the point \( \bx=\bc_\bj \):
		\[
		T_{\bj}(\bx) = \sum_{|\balpha| \leq n-1} \frac{D^{\balpha} f(\bc_{\bj})}{\balpha!} (\bx - \bc_{\bj})^{\balpha}.
		\]
		We define the global approximant of the function $f$
		\[
		f_\varepsilon(\bx) = \sum_{\bj} \psi_\bj(\bx) T_\bj(\bx).
		\]
		
		\begin{equation*}
			\begin{split}
				| f(\bx) - f_\varepsilon(\bx) |
				&= \Bigl| \sum_{\bj} \psi_\bj(\bx)\bigl(f(\bx) - T_\bj(\bx)\bigr) \Bigr| \\
				&\leq \sum_{\bj\in C_\bj} \bigl|f(\bx) - T_\bj(\bx)\bigr|,
				\quad (\psi_\bj(\bx)\leq 1) \\
				&\leq 2^d \max_{\bj\in C_\bj} \bigl|f(\bx) - T_\bj(\bx)\bigr| \\
				&\leq 2^d\frac{h^{n}}{n!2^{n}} 
				\max_{|\balpha|=n} \operatorname*{ess\,sup}_{\bx\in[-B,B]^d}
				|D^{\balpha} f(\bx)| \\
				&\leq 2^d\frac{h^{n}}{n!2^{n}} 
			\end{split}
		\end{equation*}
		The last three inequalities hold because respectively: At each point, at most 
		$2d$ 
		functions $\psi_\bj$ are nonzero; for \( \bx \in C_\bj \), we have \( |x_i - c_{j_i}| \leq \frac{h}{2} \) for all \( i \),
		and $\text{ess sup}_{\bx\in[-B,B]^d}\left|D^{\balpha} f(\bx) \right|\leq1.$
		
		We want \( \|f - f_\varepsilon\|_{\infty} \leq \frac{\varepsilon}{2} \). It suffices 
		to choose:
		\[
		M = \left\lceil \frac{2B}{h} \right\rceil = \left\lceil B \left( \frac{2^{d+1}}{(n!)\varepsilon} \right)^{1/n} \right\rceil.
		\]
		
		Thus, \( M = O(\varepsilon^{-1/n}) \) and \( N_{\text{cubes}} = M^d = O(\varepsilon^{-d/n}) \).
		In total, we have constructed the polynomial function $f_\varepsilon$ that approximates $f$ within $\varepsilon/2$
		$$
		f_\varepsilon(\bx) = \sum_{\bj\in M_d}  \sum_{|\balpha| \leq n-1} \underbrace{\frac{D^{\balpha} f(\bc_{\bj})}{\balpha!} }_{C_{\bj,\balpha} }\psi_\bj(\bx)(\bx - \bc_{\bj})^{\balpha},
		$$
		with  $C_{\bj,\balpha}\leq 1$.
		According to Theorem~\ref{theo3}, we approximate each term \(\psi_\bj(\bx) (\bx - \bc_\bj)^{\balpha} =\prod_{i=1}^d \mathbf{1}_{\bC_{j_i}}(x_i) \prod_{i=1}^d (x_i - c_{j_i})^{\alpha_i} \), a product of $d+n$ factors 
		\begin{align*}
			\mathbf{1}_{\bC_{j_1}}(x_1)\times \mathbf{1}_{\bC_{j_2}}(x_2)\times \cdots\times \mathbf{1}_{\bC_{j_d}}(x_d)\times (x_1 - c_{j_1})\times\cdots 
		\end{align*}
		by the network 
		\begin{multline*}
			\widetilde G_{\mathbf{j},\boldsymbol{\alpha}}(\mathbf{x})=\widetilde{M}_k\Bigl( \phi_{a_{j_1},b_{j_1},\tau,\kappa}(x_1), 
			\widetilde{M}_k\bigl(\phi_{a_{j_2},b_{j_2},\tau,\kappa}(x_2), \ldots,\\
			\widetilde{M}_k\bigl((x_1 - c_{j_1}),\ldots\bigr) \bigr)\Bigr)
		\end{multline*}
		We assume that the approximation error for a product is $\eta$ i.e. $|\widetilde{M}_k(x,y)-xy|\leq \eta$ (depth $O(1)$ and size $O(1)$).
		
		Let us analyze the propagation of approximation errors for $(\bx - \bc_\bj)^{\balpha}.$ Denote by $\gamma_i, i\geq 2$ the approximation error at step  of $i-1$ factors with $\gamma_2=0.$ 
		\begin{align*}
			\gamma_{i+1}=\left|\widetilde{M}_k\left(u_{i+1},\widetilde{\prod_{i=1}^{i}u_i}\right)-u_{i+1}\prod_{i=1}^{i}u_i\right|
			&\leq 
			\left|\widetilde{M}_k\left(u_{i+1},\widetilde{\prod_{i=1}^{i}u_i}\right)-u_{i+1}\widetilde{\prod_{i=1}^{i}u_i}\right|
			+\left|u_{i+1}\widetilde{\prod_{i=1}^{i}u_i}-u_{i+1}\prod_{i=1}^{i}u_i\right|\\
			&\leq \begin{cases}
				\eta + \gamma_{i},\ \text{If}\ \ |u_{i+1}|= |\phi_{\alpha_{j_{i+1}},\beta_{j_{i+1}},\tau,\kappa}(x_{i+1})|\leq 1\\
				\eta + 2\gamma_{i}, \ \text{If}\ \ |u_{i+1}|=|x_{i+1} - c_{j_i}| \leq 2B\leq 2,
			\end{cases} 
		\end{align*}
		
		So the network \(\widetilde G_{\bj,\balpha}(\bx)\) approximates the term \(\psi_\bj(\bx) (\bx - \bc_\bj)^{\balpha} \) with error
		$$\|\psi_\bj(\bx) (\bx - \bc_\bj)^{\balpha}  - \widetilde G_{\bj,\balpha}(\bx)\|_\infty \le \eta(2^{n-2}-1+d).$$

		Summing over all terms \((\bj,\balpha)\) (finite number), we obtain
		\begin{align*}
			\sup_{\bx\in [-B,B]^d} \big|\widetilde f_\varepsilon(\bx)-f_\varepsilon(\bx)\big|
			&\leq\big|\sum_{\bj,\balpha} \left(C_{\bj,\balpha}\ \widetilde G_{\bj,\balpha}(\bx) - \psi_\bj(\bx) (\bx - \bc_\bj)^{\balpha}\right)\big|\\
			&  \leq\big|\sum_{\bj : \bx\in  \text{supp}\psi_\bj}\sum_{\balpha} \left(C_{\bj,\balpha}\ \widetilde G_{\bj,\balpha}(\bx) - \psi_\bj(\bx) (\bx - \bc_\bj)^{\balpha}\right)\big|\\
			&  \leq 2^d\max_{\bj : \bx\in  \text{supp}\psi_\bj}\sum_{\balpha} \left(C_{\bj,\balpha}\ \big|\widetilde G_{\bj,\balpha}(\bx) - \psi_\bj(\bx) (\bx - \bc_\bj)^{\balpha}\big|\right)\\
			&\leq 2^d d^n\eta(2^{n-2}-1+d)
		\end{align*}
		
		Then, it suffices to choose \(\eta\) such that
		\[
		\eta = \frac{\varepsilon}{2^{d+1} d^n(2^{n-2}-1+d)}.
		\]
		i.e.
		$$k \geq \left\lceil\frac{\ln C_2 - \ln \eta}{2\ln \omega_2}\right\rceil=\mathcal{O}(\ln 1/\varepsilon)$$
		to guarantee
		\[
		\sup_{\bx\in [-B,B]^d} \big|\widetilde f_\varepsilon(\bx)-f_\varepsilon(\bx)\big| \le \frac{\varepsilon}{2}.
		\]
		Thus, we show that the network $\widetilde{f}_\varepsilon$ approximates the function $f$ with error $\varepsilon$
		\begin{align*}
			\sup_{\bx\in [-B,B]^d} \big|\widetilde f_\varepsilon(\bx)-f(\bx)\big| &\le\sup_{\bx\in [-B,B]^d} \big|\widetilde f_\varepsilon(\bx)-f_\varepsilon(\bx)+f_\varepsilon(\bx)-f(\bx)\big| 
			\\
			&\le \sup_{\bx\in [-B,B]^d} \big|\widetilde f_\varepsilon(\bx)-f_\varepsilon(\bx)\big|+ \sup_{\bx\in [-B,B]^d} \big|f_\varepsilon(\bx)-f(\bx)\big| 
			\\
			&\le \frac{\varepsilon}{2}+\frac{\varepsilon}{2}\\
			&\leq \varepsilon.
		\end{align*}

		The \(\mathcal{O}(\varepsilon^{-d/n})\) terms \(\widetilde{G}_{\mathbf{j},\alpha}\) are computed in parallel over \(\mathcal{O}(1)\) layers with width \(\mathcal{O}(\varepsilon^{-d/n})\), then summed linearly via the addition of one linear layer, yielding a network \(\widetilde{f}_\varepsilon\) with total depth \(\mathcal{O}(1)=c(d,n,B)\) and size \(\mathcal{O}(\varepsilon^{-d/n})\) satisfying \(\|f - \widetilde{f}_\varepsilon\|_\infty \leq \varepsilon\).
		
	\end{proof}

	\section{Discussions and experimental results}
	\label{sec4}

	\subsection{Parameterization of the Construction}
	
	Our hierarchical construction relies on three key parameters whose optimization is crucial for performance: the scale parameter $\beta$, the recursion parameter $k$, and the symmetrization shift $a$.
	
	\begin{figure}[h!]
		\centering
		\includegraphics[width=\linewidth, height=6cm, keepaspectratio]{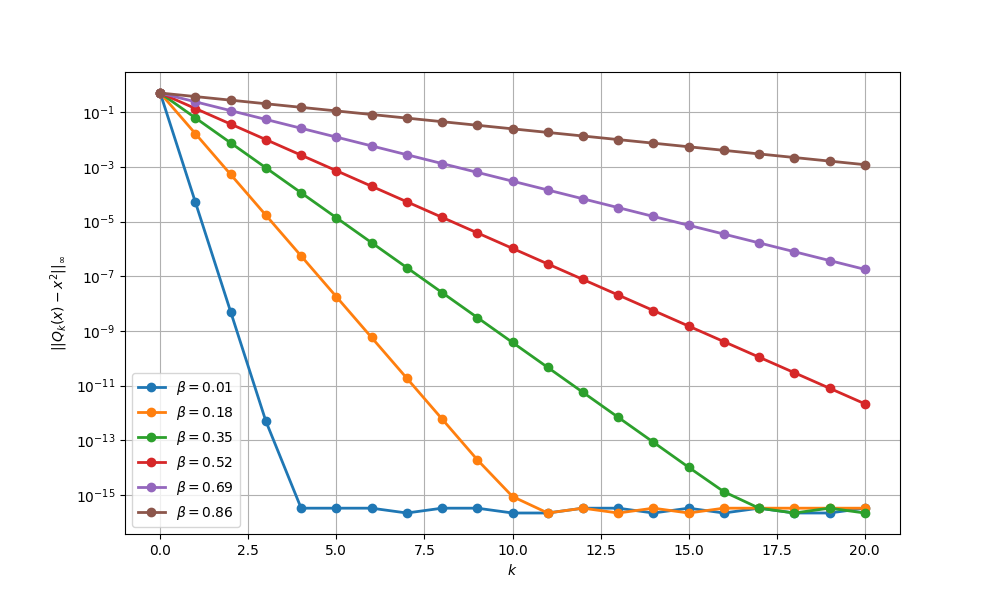}
		\caption{Error evolution as a function of $\beta$ and $k$.}
		\label{fig10}
	\end{figure}
	
	\begin{figure}[h!]
		\centering
		\includegraphics[width=\linewidth, height=6cm, keepaspectratio]{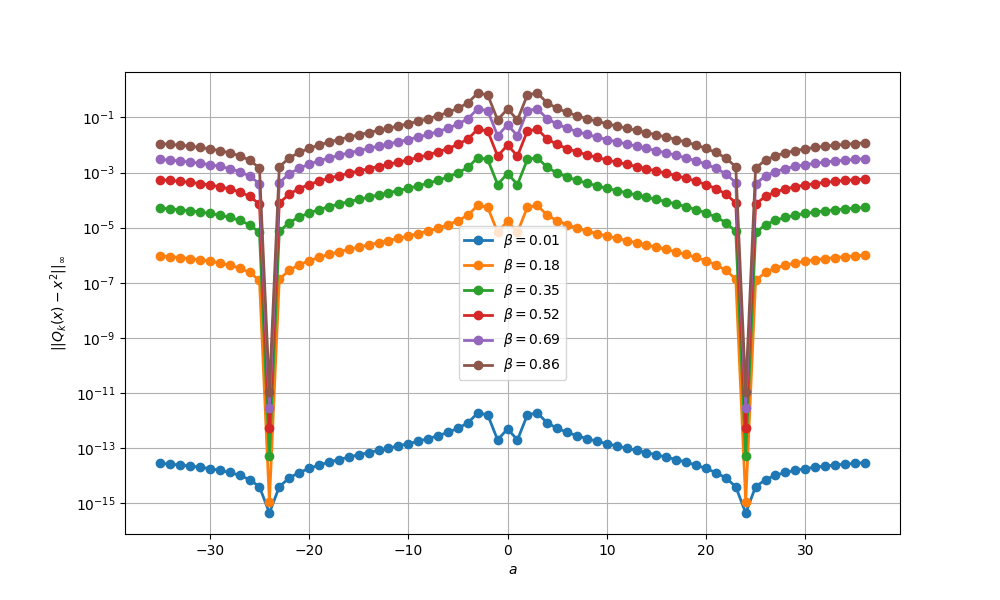}
		\caption{Error evolution as a function of $\beta$ and $a$.}
		\label{fig11}
	\end{figure}
	
	\begin{figure}[h!]
		\centering
		\includegraphics[width=\linewidth, height=6cm, keepaspectratio]{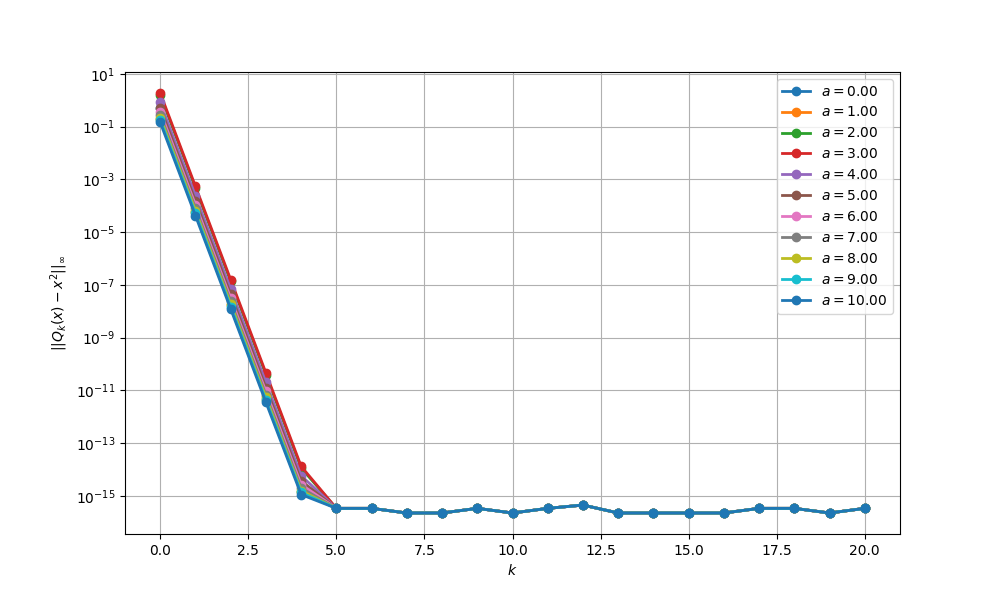}
		\caption{Error evolution as a function of the correlation between $k$ and $a$.}
		\label{fig12}
	\end{figure}
	
	\paragraph{Scale Parameter $\beta$}
	The parameter $\beta$ determines the compression rate in the recursive construction $Q_k(x) = \beta^{2k} f_a(\beta^{-k} x)$. Its optimal value results from a trade-off between convergence speed and numerical stability. A systematic analysis reveals that values that are too small (e.g., $0.01\leq \beta \leq 0.5$, Fig.~\ref{fig10}) theoretically accelerate convergence. These values of$\beta$ offer a good compromise, allowing for low values of $k$ to achieve maximum reconstruction errors on the order of $10^{-7}$.
	
	\paragraph{Recursion Parameter $k$}
	The parameter $k$, less influenced by $a$, controls the accuracy of the approximation according to the relation $Q_k(x) = \beta^{2k} f_a(\beta^{-k} x)$. Experimentally, we observe that $k = 3$ is sufficient to achieve an accuracy of $10^{-6}$ in the approximation of the function $x^2$ (Fig.~\ref{fig12}), while $k = 5$ allows reaching $10^{-15}$ (Fig.~\ref{fig10}). 
	
	\paragraph{Symmetrization Shift $a$}
	The parameter $a$ in the  function $g_a(x) = (x+a)\sigma(x+a) + (-x+a)\sigma(-x+a) - 2a\sigma(a)$ influences the dominant coefficient $K(a) = 2\sigma'(a) + a\sigma''(a)$. Numerical analysis shows that the Taylor expansion remains well‑conditioned when centered at \(a = 0\) or \(a = 1\), provided the function and the evaluation points are appropriately scaled (Fig.~\ref{fig11}).
	
	\subsection{Approximation of Continuous Functions}
	
	Our experiments (Figs. \ref{fig13}~-~\ref{fig16}) implemented networks of by theorems~\ref{theo6},~\ref{theo7} and~\ref{theo8} to approximate the following functions:
	\begin{align*}
		T_1(x) = \log(7 + x) \cos(x^3), \ 
		T_2(x) = \frac{1}{1 + e^{-x}}, \ 
		T_3(x) = (x, \sin(x), \cos(x)), \
		T_4(x, y)= \frac{\sin(\pi x)\cos (\pi x)}{2\pi}.
	\end{align*}
	\begin{figure}[h!]
		\centering
		\includegraphics[width=0.48\linewidth, height=6cm, keepaspectratio]{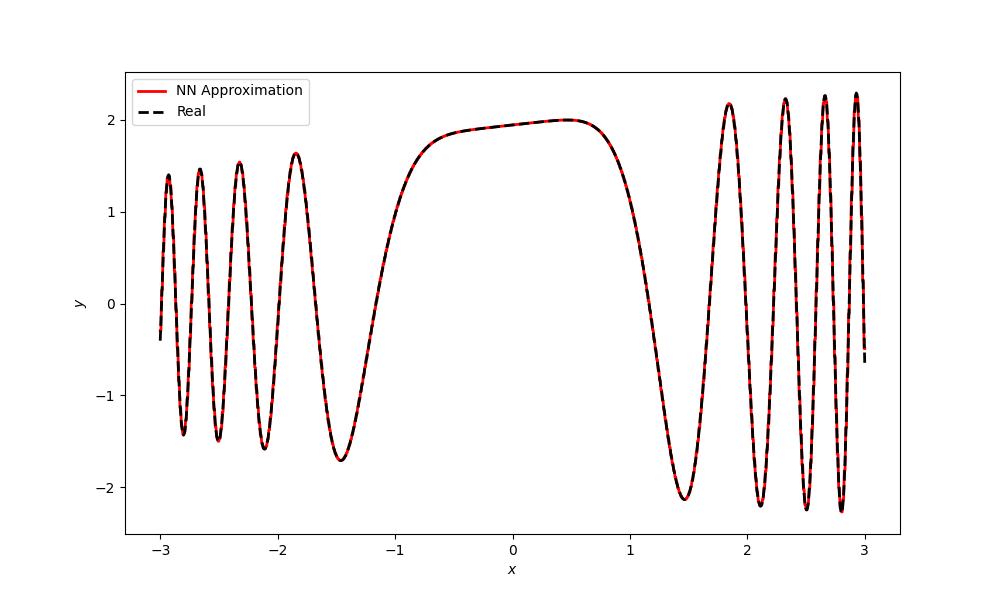} \quad
		\includegraphics[width=0.48\linewidth, height=6cm, keepaspectratio]{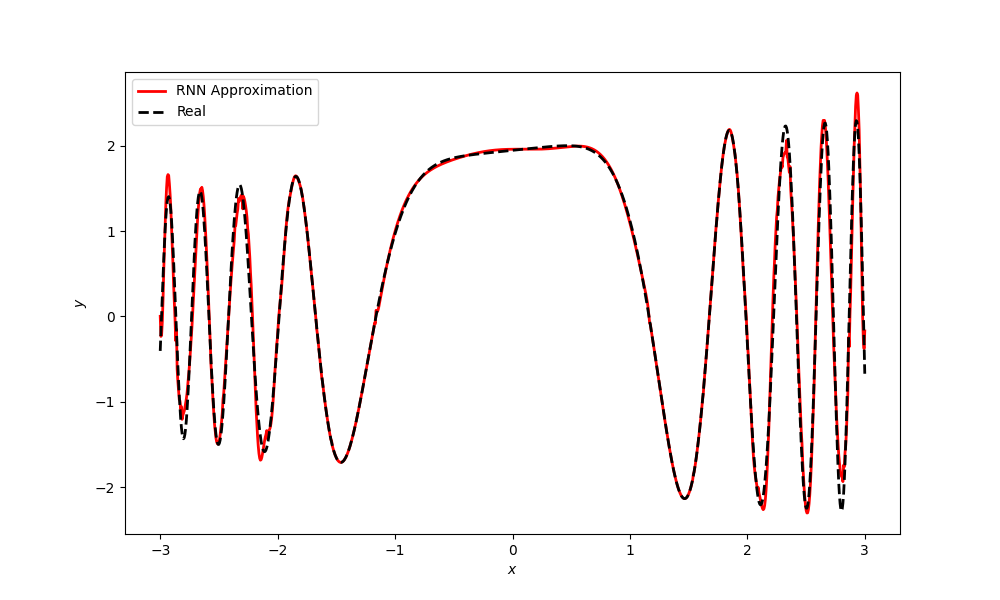}
		\caption{Approximation of the function $\log(7+x)\cos(x^3)$ by the NN of  Theorem~\ref{theo6} (in left) and Theorem~\ref{theo7} (in right).}
		\label{fig13}
	\end{figure}
	
	\begin{figure}[h!]
		\centering
		\includegraphics[width=0.48\linewidth, height=6cm, keepaspectratio]{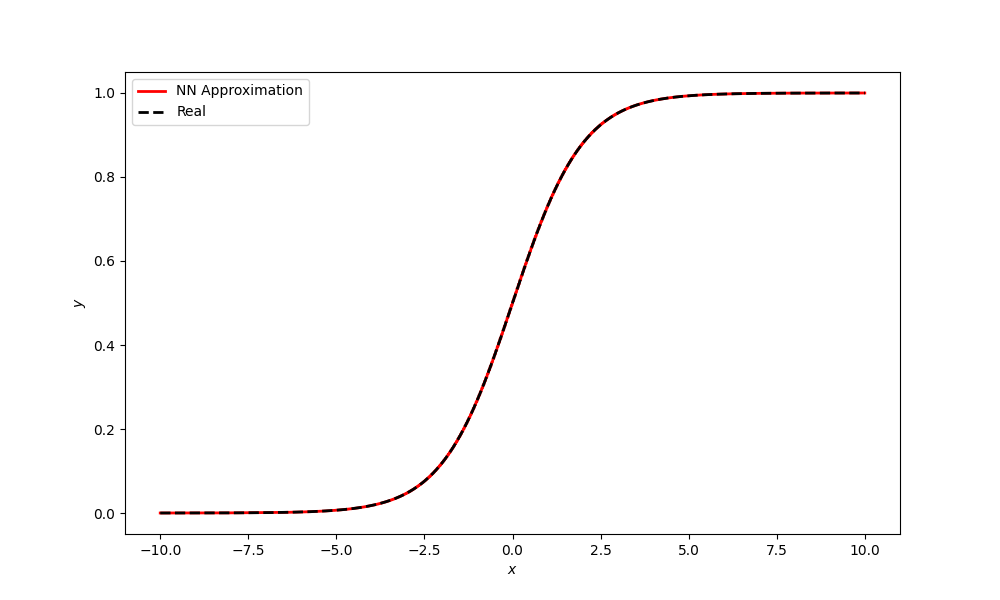}\quad
		\includegraphics[width=0.48\linewidth, height=6cm, keepaspectratio]{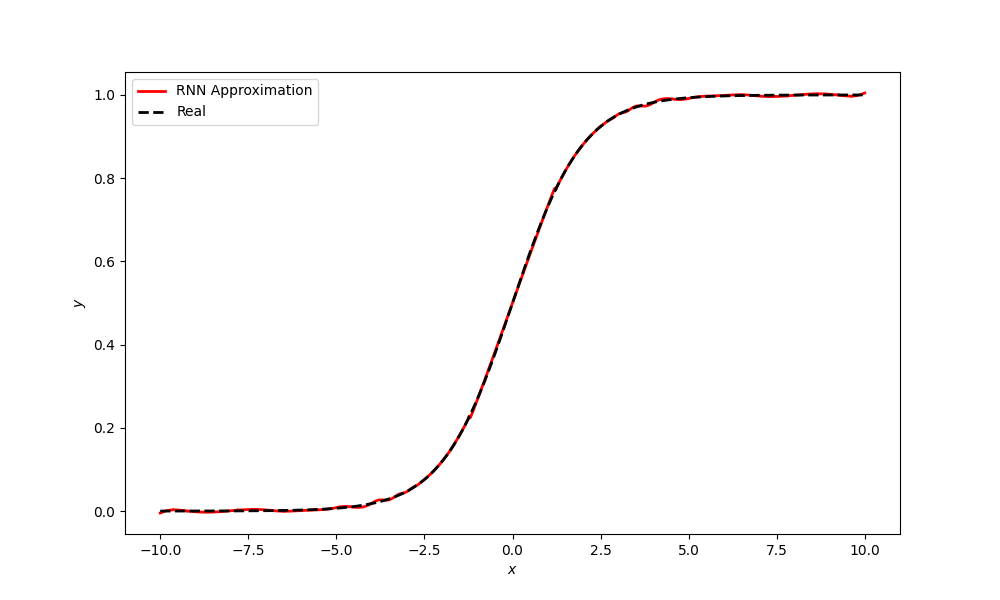}
		\caption{Approximation of the sigmoid function $\frac{1}{1+e^{-x}}$ by the NN of  Theorem~\ref{theo6} (in left) and Theorem~\ref{theo7} (in right).}
		\label{fig14}
	\end{figure}
	
	\begin{figure}[h!]
		\centering
		\includegraphics[width=\linewidth, height=6cm, keepaspectratio]{./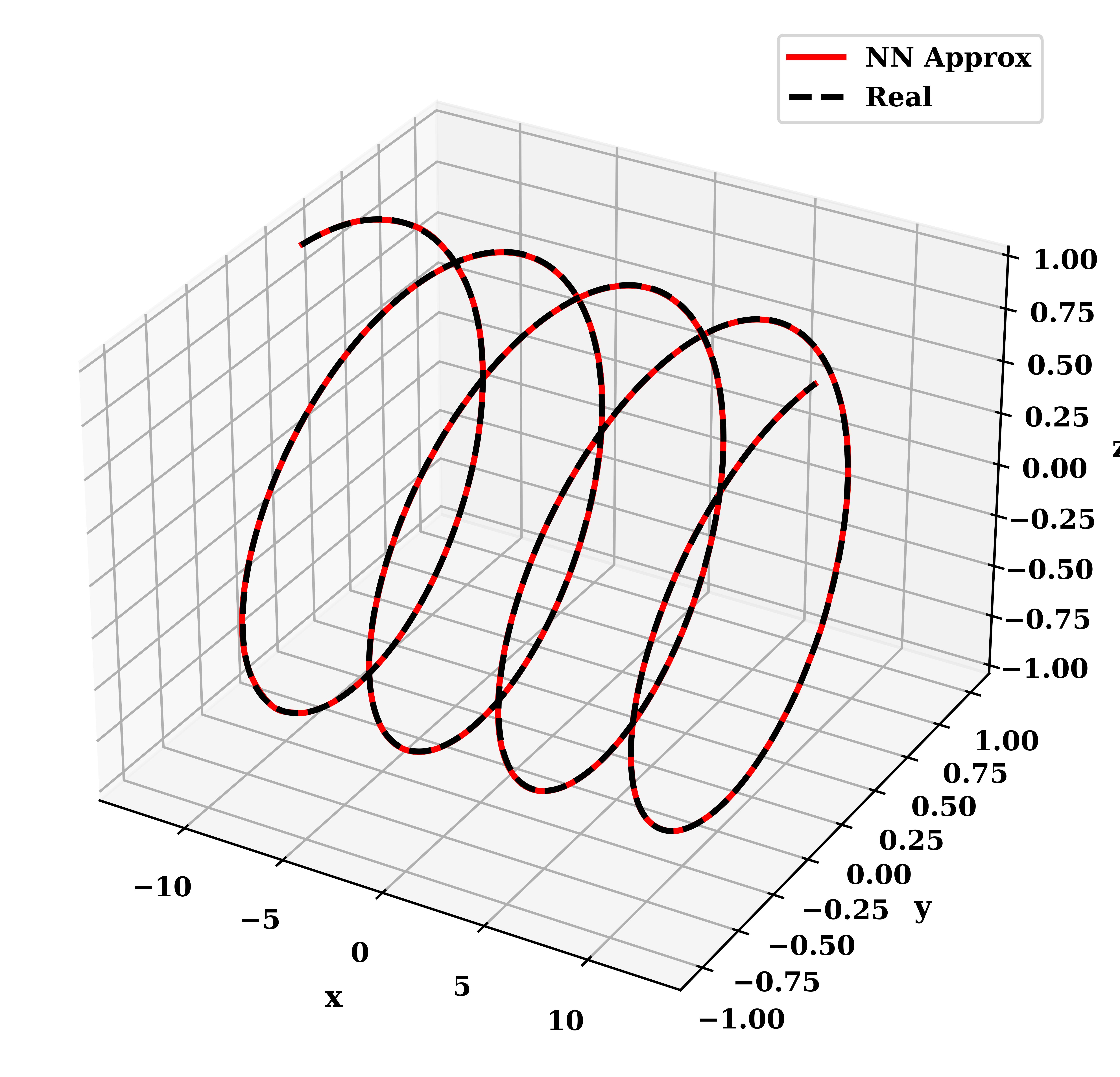}
		\caption{Approximation of the function $x \mapsto (x, \sin(x), \cos(x))$ by theorem~\ref{theo7}.}
		\label{fig15}
	\end{figure}
	\begin{figure}[h!]
		\centering
		\includegraphics[width=\linewidth, height=6cm, keepaspectratio]{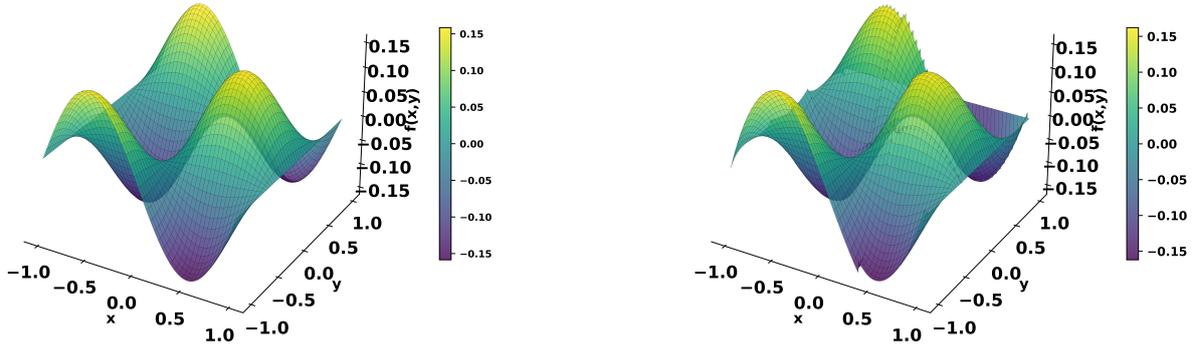}
		\caption{Approximation of the function $(x, y)\mapsto \frac{\sin(\pi x)\cos (\pi x)}{2\pi}$ (in left) by theorem~\ref{theo8} (in right).}
		\label{fig16}
	\end{figure}
	
	The proposed architectures with SiLU activation demonstrate remarkable approximation capabilities for extended classes of continuous and analytic functions. Our theoretical analysis and experimental validation establish that these models can be designed as generators of adaptive functional bases, systematically producing monomials, cross-interactions, and other fundamental functional elements through deep sequential compositions.
	\begin{figure}[h!]
		\centering
		\includegraphics[width=\linewidth, height=8cm, keepaspectratio]{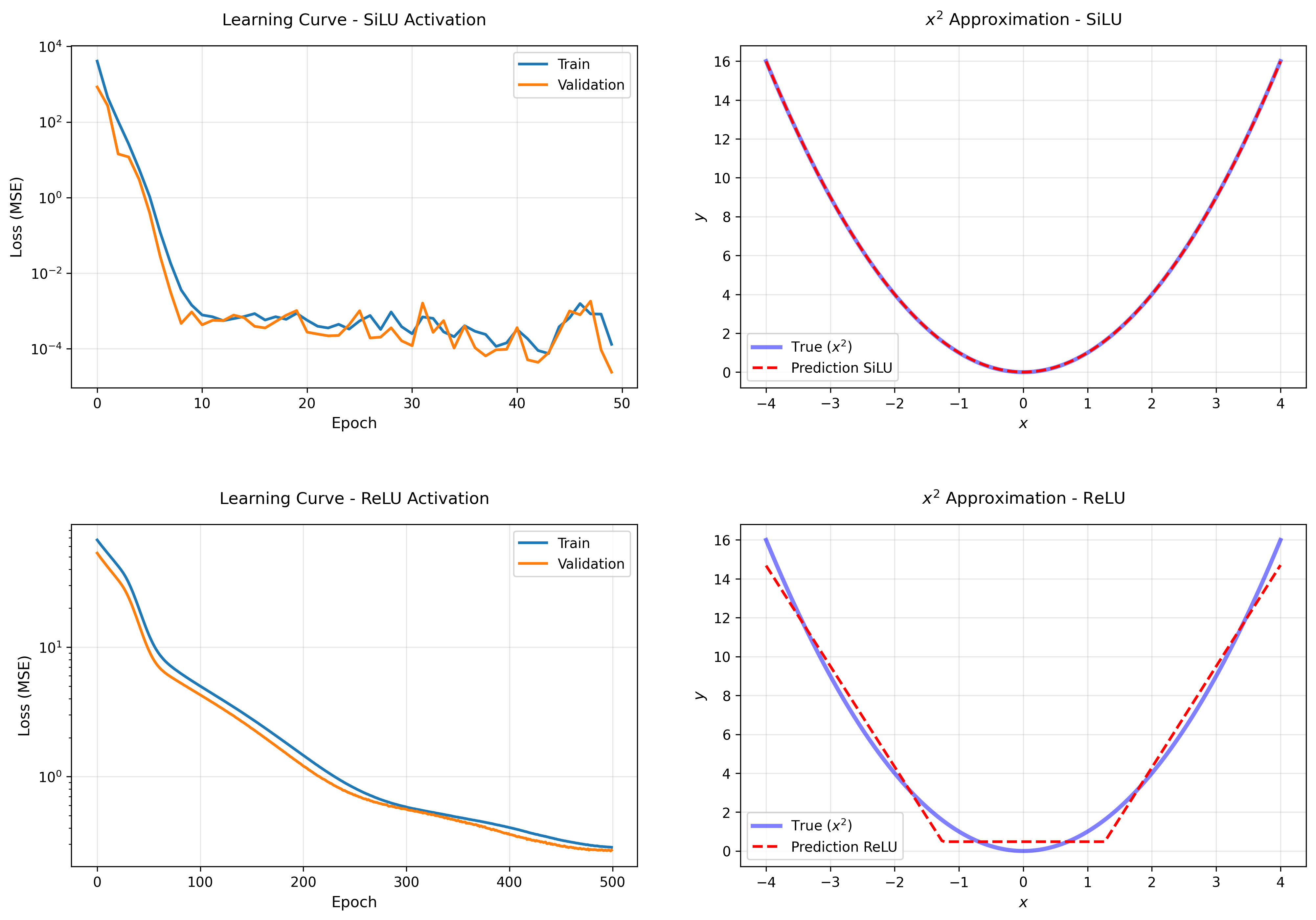}
		\caption{Using the architecture specified in Theorem~\ref{theo1} to approximate the function \(x^2\), a comparison between SiLU and ReLU activations reveals a clear performance superiority of the SiLU network.}
		\label{fig17}
	\end{figure} 
	Based on some experimental results (Figs.~\ref{fig13} and~\ref{fig14}), we observe good accuracy in the approximation by neural networks with SiLU activation functions via the step functions. The theoretical construction in proof of theorem~\ref{theo8} is experimentally validated with $B = 1$, $n = 5$, $M = 3$ and $2000$ training points, confirming the practical efficiency of SiLU networks for Sobolev function approximation (Fig.~\ref{fig16}).
	
	\paragraph{Performance Analysis}
	
	Experimental implementation (Figs.~\ref{fig17} and~\ref{fig18}) of the architectures in Theorems~\ref{theo1} and~\ref{theo3} for approximating \(x^2\) and \(x y\) highlights the crucial role of parameter initialization. 
	SiLU networks initialized near the analytical solution converge via backpropagation to values close to the theoretical optimum, whereas ReLU networks, whith same architecture, often become trapped in suboptimal minima, yielding less accurate approximations.	
	Our work shows that by allowing weights to scale with precision requirements, one can achieve constant-depth networks for Sobolev approximation. While the total bit complexity remains $\mathcal{O}(\varepsilon^{-d/n}\ln (1/\varepsilon))$ as in Yarotsky's work, the architectural depth is reduced from logarithmic to constant.

	\begin{figure}[H]
		\centering
		\includegraphics[width=\linewidth, height=8cm, keepaspectratio]{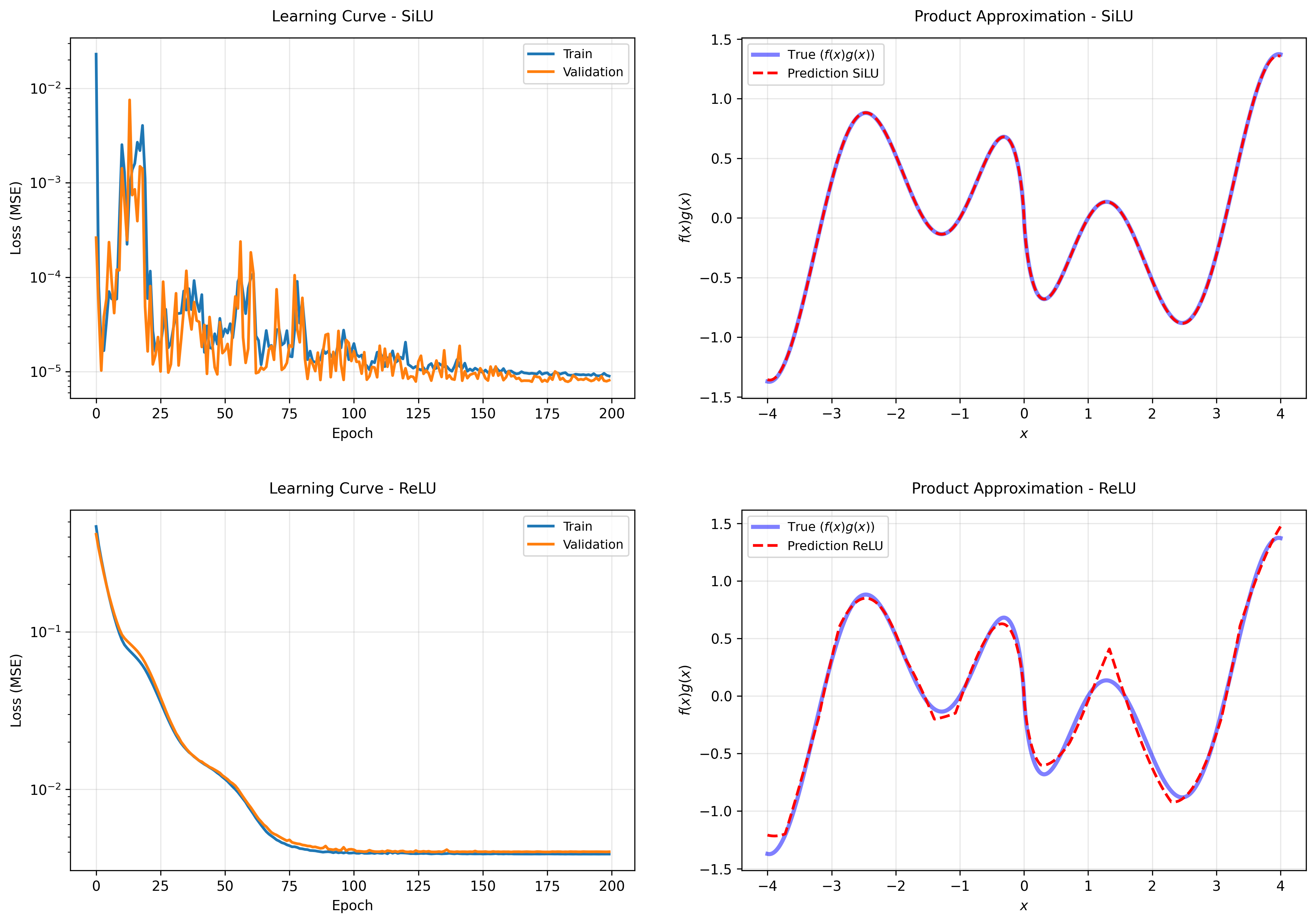}
		\caption{The same observations hold for the network approximating the product $\cos x\cdot \sin x\log(x^2)$ using the architecture from Theorem~\ref{theo2}.} 
		\label{fig18}
	\end{figure} 
	\section{Conclusion}
	This work develops constructive approximation methods for SiLU networks, showing that with precision-tunable activation parameters, one can build constant-depth networks for Sobolev functions using $\mathcal{O}(\varepsilon^{-d/n})$ parameters. Our explicit constructions exhibit exponential convergence for basic operations and reveal parameter-efficient structures for polynomial approximation. Compared to logarithmic-depth ReLU networks, our approach trades architectural depth for increased weight precision, offering an alternative efficiency trade-off. These results advance the understanding of parametric activation networks and open up applicative perspectives in scientific computing where constant-depth architectures may prove advantageous.
	
	\newpage
	
	\section*{Acknowledgments}
	This work was supported by the Agence Française de Développement (AFD) through the Agence Nationale
	de la Recherche (ANR-21-PEA2-0007), under the Partenariats with "l’Enseignement supérieur Africain"
	(PEA) program.

	
	\bibliographystyle{plainnat}  
	
	\bibliography{./biblio_SiLU}

	\appendix
	
	\section{An upper bound of $|\sigma^{(n)}(a)|$}
	\label{bound}
	The function $\sigma(x) = (1+e^{-x})^{-1}$ is analytic in $\mathbb{C}$. For any $a \in \mathbb{R}$ and $0 < r < \sqrt{a^2+\pi^2}$, Cauchy's formula gives:
	\[
	|\sigma^{(n)}(a)| \leq \frac{n!}{r^n} \max_{|x-a|=r} |\sigma(x)|.
	\]
	
	The poles of $\sigma$ are at $x_k = i\pi(1+2k)$, $k\in\mathbb{Z}$, hence the distance from $a$ to the nearest pole is $d(a) = \sqrt{a^2+\pi^2} \geq \pi$. 
	Choosing $r = \pi/2$ (which satisfies $r < d(a)$ for all $a \in \mathbb{R}$), we have:
	\[
	\delta(a) := \min_{|x-a|=\pi/2} |1+e^{-x}| > 0,
	\]
	and consequently
	\[
	\max_{|x-a|=\pi/2} |\sigma(x)| \leq \frac{1}{\delta(a)}.
	\]
	
	Therefore,
	\[
	|\sigma^{(n)}(a)| \leq \frac{n!}{(\pi/2)^n} \cdot \frac{1}{\delta(a)}
	= n! \cdot \left(\frac{2}{\pi}\right)^n \cdot C(a),
	\]
	where $C(a) = 1/\delta(a)$ is independent of $n$. In particular, there exists a constant $C'_a$ such that
	\[
	\left|\frac{\sigma^{(n)}(a)}{n!}\right| \leq C'_a \left(\frac{2}{\pi}\right)^n.
	\]
	
	\section{Finite Difference Operator $\Delta$}
	\label{diff}
	The finite difference operator $\Delta$ is define in \cite{jordan1965calculus}.
	
	Let \( \Delta f(x) = f(x+1) - f(x) \).
	Noting that \( \Delta = E - I \) where \( E f(x) = f(x+1) \) and $I f(x) = f(x)$, the binomial theorem of Newton allows us to write
	\[
	\Delta^m = (E - I)^m = \sum_{j=0}^m \binom{m}{j} E^j (-I)^{m-j}
	\]
	Since \( E^j f(0) = f(j) \),
	we finally obtain the formula:
	\[
	\Delta^m f(0) = \sum_{j=0}^m (-1)^{m-j} \binom{m}{j} f(j)
	\]
	Let us recall a useful property of finite differences of falling factorials:
	\[
	x^{\underline{k}} = x(x-1)\dots(x-k+1)
	\]
	of degree \( k \), leading coefficient 1.
	
	\begin{lem}[Differences of falling factorials]
		For $k\ge m$,
		\[
		\Delta x^{\underline{k}} = k\,x^{\underline{k-1}}, \qquad
		\Delta^{\,m} x^{\underline{k}} = k^{\underline{m}}\,x^{\underline{k-m}},
		\]
		where $k^{\underline{m}} = k(k-1)\dots(k-m+1)$ is the falling factorial of $k$.
	\end{lem}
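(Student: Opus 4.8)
The plan is to prove the single-step formula $\Delta x^{\underline{k}} = k\,x^{\underline{k-1}}$ by a direct factorization, and then to obtain the general $m$-step formula by induction on $m$, using linearity of $\Delta$ at each stage. This is the exact discrete analogue of the familiar power rule $\frac{d}{dx}x^{k} = k x^{k-1}$, and the proof mirrors that analogy.

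First I would expand both terms appearing in the difference. Writing $(x+1)^{\underline{k}} = (x+1)\,x\,(x-1)\cdots(x-k+2)$ and $x^{\underline{k}} = x\,(x-1)\cdots(x-k+2)\,(x-k+1)$, I observe that both products share the common factor $x^{\underline{k-1}} = x(x-1)\cdots(x-k+2)$. Factoring it out yields $\Delta x^{\underline{k}} = x^{\underline{k-1}}\bigl[(x+1) - (x-k+1)\bigr] = k\,x^{\underline{k-1}}$, which establishes the first identity. The only care needed here is correctly identifying the shared factor as $x^{\underline{k-1}}$ rather than miscounting one term at either end of the product.

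Next I would iterate this to reach the $m$-th difference. Since $\Delta$ is linear, any constant factor passes through it, so the base case $m=1$ is precisely the identity just proved, with $k^{\underline{1}} = k$. For the inductive step, assuming $\Delta^{m} x^{\underline{k}} = k^{\underline{m}}\,x^{\underline{k-m}}$, I apply $\Delta$ once more and invoke the single-step formula on $x^{\underline{k-m}}$, obtaining $\Delta^{m+1} x^{\underline{k}} = k^{\underline{m}}\,\Delta x^{\underline{k-m}} = k^{\underline{m}}(k-m)\,x^{\underline{k-m-1}} = k^{\underline{m+1}}\,x^{\underline{k-(m+1)}}$, where I use the identity $k^{\underline{m}}(k-m) = k^{\underline{m+1}}$ on the falling factorials of $k$. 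This closes the induction and delivers the claimed formula for all admissible $m$.

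There is no substantial obstacle in this argument; it is a clean, self-contained computation. The only point requiring attention is the index bookkeeping: I must ensure that the constraint $k \ge m$ in the hypothesis is respected throughout, so that every falling factorial $x^{\underline{k-m}}$ that appears has nonnegative order and the single-step formula remains applicable at each stage of the induction. That hypothesis guarantees exactly this, so the argument goes through without further conditions.
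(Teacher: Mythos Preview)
Your proof is correct and follows essentially the same approach as the paper: a direct factorization of the common factor $x^{\underline{k-1}}$ to obtain the single-step identity, followed by a straightforward induction on $m$ using linearity of $\Delta$ and the recursion $k^{\underline{m}}(k-m)=k^{\underline{m+1}}$.
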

	
	\begin{proof}
		Using $ \Delta x^{\underline{k}} = (x+1)^{\underline{k}} - x^{\underline{k}}$ and factoring
		$x(x-1)\dots(x-k+2)$ gives
		\[
		\Delta x^{\underline{k}} = \big[(x+1)-(x-k+1)\big]\,x(x-1)\dots(x-k+2)=k\,x^{\underline{k-1}}.
		\]
		The formula for $\Delta^{\,m}$ follows by induction on $m$: assuming it holds for $m$,
		\[
		\Delta^{\,m+1}x^{\underline{k}} 
		= \Delta\bigl[k^{\underline{m}}\,x^{\underline{k-m}}\bigr] 
		= k^{\underline{m}}\,(k-m)\,x^{\underline{k-m-1}} 
		= k^{\underline{m+1}}\,x^{\underline{k-(m+1)}}. \qedhere
		\]
	\end{proof}
	\newpage	
	
\end{document}